\def\BibTeX{{\rm B\kern-.05em{\sc i\kern-.025em b}\kern-.08em
    T\kern-.1667em\lower.7ex\hbox{E}\kern-.125emX}}
\theoremstyle{plain}
\newtheorem{theorem}{Theorem}[section]
\theoremstyle{definition}
\newtheorem{assumption}[theorem]{Assumption}
\theoremstyle{remark}
\begin{document}

\title{Staleness-Alleviated Distributed GNN Training via Online Dynamic-Embedding Prediction}

\author{\IEEEauthorblockN{Guangji Bai}
\IEEEauthorblockA{\textit{Department of Computer Science} \\
\textit{Emory University}\\
Atlanta, USA \\
guangji.bai@emory.edu}
\and
\IEEEauthorblockN{Ziyang Yu}
\IEEEauthorblockA{\textit{Department of Systems Design Engineering} \\
\textit{University of Waterloo}\\
Waterloo, Canada \\
z333yu@uwaterloo.ca}
\and
\IEEEauthorblockN{Zheng Chai}
\IEEEauthorblockA{\textit{Department of Computer Science} \\
\textit{University of Virginia}\\
Charlottesville, USA \\
dub6yh@virginia.edu}
\and
\IEEEauthorblockN{Yue Cheng}
\IEEEauthorblockA{\textit{Department of Computer Science} \\
\textit{University of Virginia}\\
Charlottesville, USA \\
mrz7dp@virginia.edu}
\and
\IEEEauthorblockN{Liang Zhao}
\IEEEauthorblockA{\textit{Department of Computer Science} \\
\textit{Emory University}\\
Atlanta, USA \\
liang.zhao@emory.edu}
}

\maketitle

\begin{abstract}
Despite the recent success of Graph Neural Networks (GNNs), it remains challenging to train GNNs on large-scale graphs due to neighbor explosions. As a remedy, distributed computing becomes a promising solution by leveraging abundant computing resources (e.g., GPU). However, the node dependency of graph data increases the difficulty of achieving high concurrency in distributed GNN training, which suffers from the massive communication overhead. To address it, \emph{Historical value approximation} is deemed a promising class of distributed training techniques. It utilizes an offline memory to cache historical information (e.g., node embedding) as an affordable approximation of the exact value and achieves high concurrency. However, such benefits come at the cost of involving dated training information, leading to staleness, imprecision, and convergence issues. To overcome these challenges, this paper proposes SAT (Staleness-Alleviated Training), a novel and scalable distributed GNN training framework that reduces the embedding staleness adaptively. The key idea of SAT is to model the GNN's embedding evolution as a temporal graph and build a model upon it to predict future embedding, which effectively alleviates the staleness of the cached historical embedding. We propose an online algorithm to train the embedding predictor and the distributed GNN alternatively and further provide a convergence analysis. Empirically, we demonstrate that SAT can effectively reduce embedding staleness and thus achieve better performance and convergence speed on multiple large-scale graph datasets.
\end{abstract}

\begin{IEEEkeywords}
component, formatting, style, styling, insert
\end{IEEEkeywords}

\section{Introduction}
\label{sec:introduction}

\emph{Graph Neural Networks} (GNNs) have shown impressive success in analyzing non-Euclidean graph data and have achieved promising results in various applications, including social networks, recommender systems, and knowledge graphs, etc.~\cite{dai2016discriminative,ying2018graph,lei2019gcn}. Despite their great promise, GNNs meet significant challenges when being applied to large graphs, which are common in the real world---the number of nodes goes beyond millions or even billions. 
Training GNNs on large graphs is jointly challenged by the lack of inherent parallelism in the backpropagation optimization and heavy inter-dependencies among graph nodes, rendering existing parallel techniques inefficient. To tackle such unique challenges, distributed GNN training is a promising open domain that has attracted fast-increasing attention in recent years and has become the \emph{de facto} standard for fast and accurate training over large graphs~\cite{dorylus_osdi21,ramezani2021learn,wan2022pipegcn,chai2022distributed}.

A key challenge in distributed GNN training lies in obtaining accurate node embeddings based on the neighbor nodes and subgraphs while avoiding massive communication overhead incurred by the message passing across them. On the one hand, naively partitioning the graph into different subgraphs by dropping the edges across them can reduce communications among subgraphs. However, this will result in severe information loss and highly inaccurate approximation of node embeddings~\cite{angerd2020distributed,jia2020improving,ramezani2021learn}. On the other hand, propagating all the information between different subgraphs will guarantee accurate node embeddings, while inevitably suffering huge communication overhead and plagued efficiency due to neighbor explosion~\cite{ma2019neugraph,zhu2019aligraph,zheng2020distdgl,tripathy2020reducing}. More recently, using historical value to approximate the exact one has been widely used and achieved SOTA performance in large-scale GNN training~\cite{fey2021gnnautoscale,wan2022pipegcn,yu2022graphfm,chai2022distributed}. Specifically, by leveraging an offline memory to cache \emph{historical embeddings} (e.g., of the nodes) to approximate true embeddings, such methods can achieve a constant communication cost over graph size while the inter-dependency between subgraphs is retained. 

However, the aforementioned idea is bottlenecked by the \emph{staleness} of the historical embeddings. Such dated embeddings further lead to staleness and imprecision in the gradients of the embeddings and model parameters during the backward pass. As shown in Figure~\ref{fig: embedding staleness curve}, we measure the staleness of historical embeddings (red curves) of a GCN trained on 2 graph datasets, and the staleness error is nontrivial throughout the entire training. The staleness error degrades the model's performance and slows the convergence, which we empirically validated in our experiment section.

\begin{figure}[t!]
\centering
\begin{subfigure}{0.23\textwidth}
  \includegraphics[width=1\linewidth]{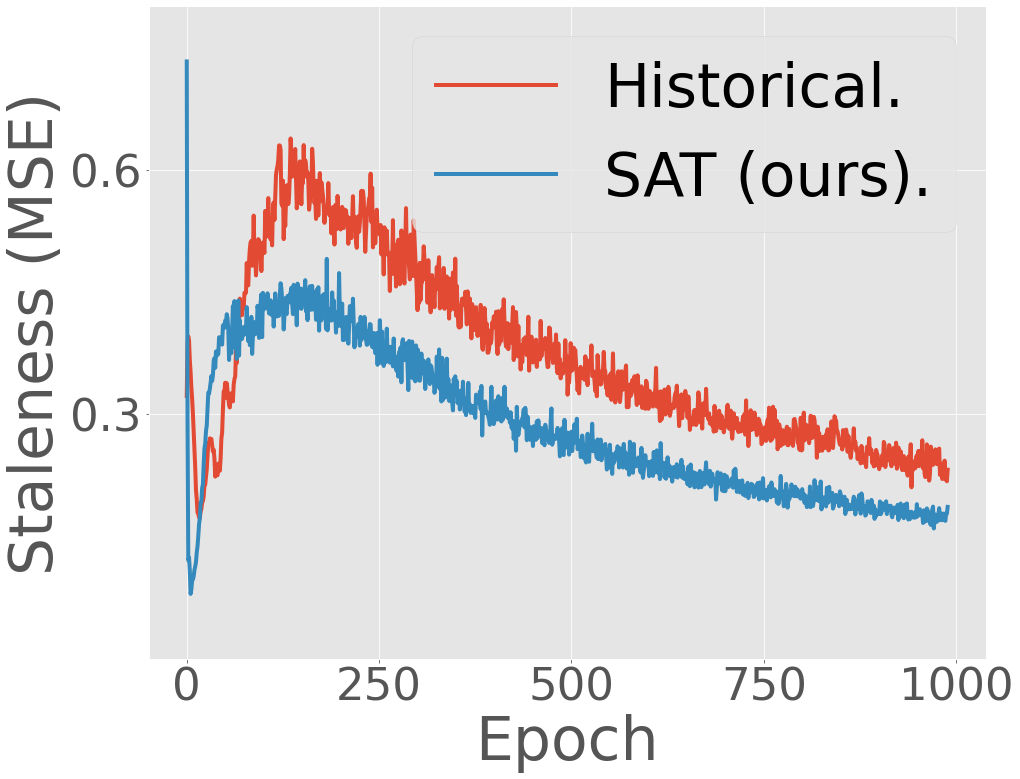}
  \caption{GCN on Flickr}
\end{subfigure}
\begin{subfigure}{0.23\textwidth}
  \includegraphics[width=1\linewidth]{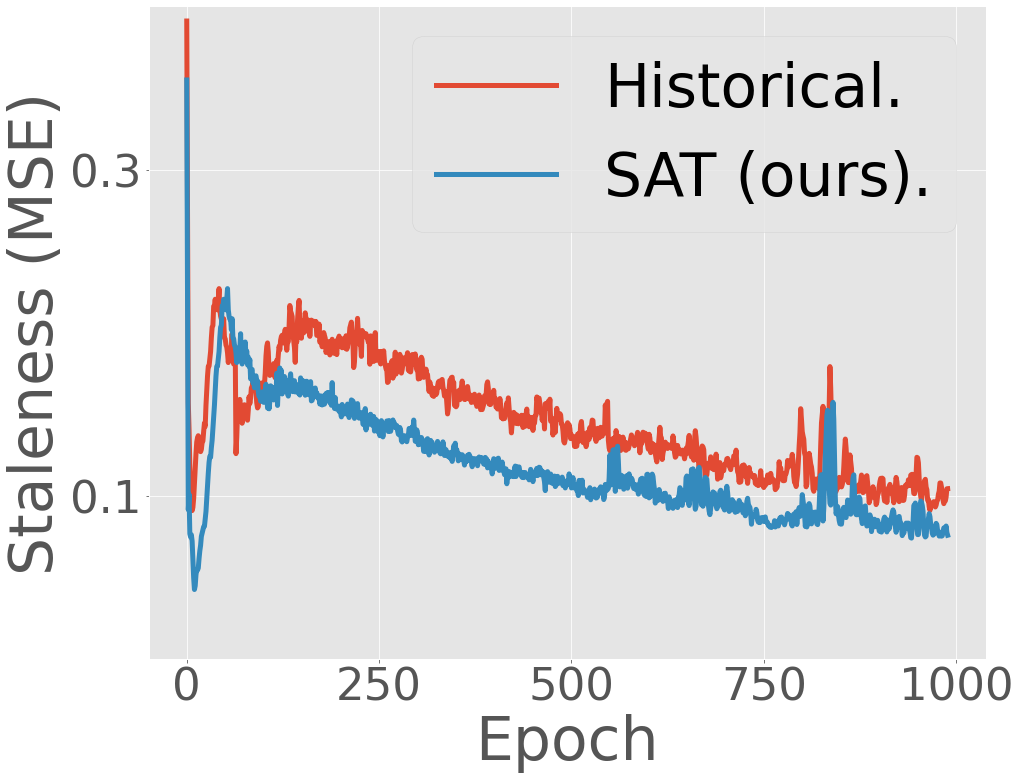}
  \caption{GCN on Reddit}
\end{subfigure}
\caption{\textbf{Embedding staleness and its alleviation.} Comparison of embedding staleness with (blue) or without (red) our method. The staleness error is measured concerning the full-graph training's embedding which has zero staleness.\vspace{-0.3cm}}
  \label{fig: embedding staleness curve}
\vspace{-2mm}
\end{figure}

Alleviating the staleness means making the historical embedding a more accurate and timely estimate of the actual embedding, which is, though appealing, difficult to achieve due to several challenges: \textbf{1). Difficulties in tracking the dynamics of the true embeddings.} Alleviating the staleness requires not only modeling the temporal patterns and trends of actual and historical embeddings across iterations but also capturing these embeddings' mutual dependencies used in GNN computation; \textbf{2). Difficulty in designing an efficient and scalable algorithm for staleness reduction.} Adjusting the historical embedding incurs extra computational overhead. So it is challenging to ensure a better trade-off between its cost and quality toward a substantial performance gain. \textbf{3). Unknown impact on the model training.} Using the adjusted historical embedding may change the properties of the training process of GNNs and trouble its convergence and stability. Theoretical analysis and guarantee are imperative yet nontrivial due to the involvement of historical adjustment.

\noindent\textbf{Our  Contribution.} To jointly address all challenges above, we propose a novel distributed GNN training framework toward an appealing trade-off between concurrency and quality of embedding calculation with \underline{\textbf{S}}taleness \underline{\textbf{A}}lleviation \underline{\textbf{T}}raining, or \textbf{SAT}.
In SAT, we design a new architecture called the \emph{embedding predictor} that handles the staleness issue in a data-driven manner and enjoys good scalability. We innovatively formulate the distributed GNN's embeddings as a sequence of \emph{temporal graphs} with their nodes \& edges induced by the original graph and the time defined as training epochs, where each temporal graph fully characterizes the evolution of node embeddings for each local GNN. Based on the temporal graphs, we further propose a multi-task learning loss to jointly optimize the embedding predictor and the temporal graphs, where each task corresponds to the embedding prediction on a specific local subgraph. In terms of the optimization of SAT framework, due to the fact that the parameters of the distributed GNN and the embedding predictor are \emph{coupled} and form a nested optimization problem, we propose an online algorithm to train each model alternatively and provide the theoretical guarantees of how the embedding predictor could affect the convergence under the distributed setting.
Finally, we perform extensive evaluations over 6 comparison methods on 8 real-world graph benchmarks with 2 different GNN operators, where our framework can boost existing state-of-the-art methods' performance and convergence speed by a great margin as a result of reduced staleness in node embeddings.

\section{Related Work}
\label{sec: related work}

\noindent\textbf{Distributed GNN Training.} 
The process of distributed GNN training necessitates the division of the original graph into multiple subgraphs, each of which is processed in parallel. The methodologies employed in such systems fall into two primary categories: \emph{partition-based} and \emph{propagation-based} training strategies.

Partition-based techniques segment graphs into subgraphs, allowing parallel training with reduced inter-subgraph communication but at the cost of significant information loss due to neglected node dependencies. In partition-based training, systems such as NeuGraph~\cite{ma2019neugraph} and AliGraph~\cite{yang2019aligraph} address GPU memory constraints by shuttling data partitions between GPU and storage. This swapping, however, is not without cost, adding significant overhead to the training process. LLCG~\cite{ramezani2021learn} proposes a decentralized training paradigm where subgraphs are processed independently, utilizing a central server for model aggregation. To counteract information loss from graph partitioning, LLCG employs a sampling strategy that attempts to preserve the global graph structure within each subgraph. Despite this, the approach struggles to fully encapsulate the global context, often at the expense of model performance.

In contrast, propagation-based methods maintain edge connections across subgraphs for neighbor aggregation, preserving information at the expense of increased communication overhead and potential training inefficiencies due to 'neighborhood explosion' as GNN depth increases. Propagation-based systems, exemplified by DGL~\cite{wang2019deep}, operate by sharing node representations across partitions. Unlike partition-based counterparts, DGL's strategy requires constant communication to exchange these representations during local training iterations, leading to substantial communication overhead. P3~\cite{gandhi2021p3} seeks to alleviate this overhead by partitioning both the feature and GNN layers, aiming to refine the model's internal information flow. However, P3's methodology imposes limitations on the dimensions of the GNN's hidden layers, which must be smaller than the input feature dimensions. This restriction has the potential to dilute the model's expressiveness and, consequently, its overall performance capabilities. For a more comprehensive literature review, one may refer to this survey~\cite{shao2022distributed}.

\noindent\textbf{GNNs with Historical Value Approximation.}
The idea of considering \emph{historical values} (of embedding, gradient, etc) as an approximation of the exact values can date back to distributed training on \emph{i.i.d} data~\cite{huo2018decoupled,xu2020acceleration}. With the rising popularity of GNNs, such an idea has been extended to train GNNs, especially on large-scale graphs. For example, in sampling-based methods, VR-GCN~\cite{chen2018stochastic} uses historical embeddings to reduce neighbor sampling variance. GNNAutoScale~\cite{fey2021gnnautoscale} leverages historical embeddings of \emph{1-hop} neighbors to achieve efficient mini-batch training. GraphFM~\cite{yu2022graphfm} applies a momentum step on historical embeddings to obtain better embedding approximations of sampled nodes. In distributed GNN training, PipeGCN~\cite{wan2022pipegcn} proposed a pipeline parallelism training for GNNs based on historical embeddings and gradients. DIGEST~\cite{chai2022distributed} leverages historical embeddings to achieve computation-storage separation and partition-parallel training.

\section{Preliminaries}

\noindent\textbf{Graph Neural Networks.} GNNs aim to learn a function of signals/features on a graph $\mathcal{G(V, E)}$ with node embeddings $\mathbf{X}\in\mathbb{R}^{\vert\mathcal{V}\vert\times d}$, where $d$ denotes the node feature dimension. For typical semi-supervised node classification tasks~\cite{kipf2016semi}, where each node $v\in\mathcal{V}$ is associated with a label $\mathbf{y}_v$, a $L$-layer GNN $\boldsymbol{f}_{\boldsymbol{\theta}}$ parameterized by $\boldsymbol{\theta}$ is trained to learn the node embedding $\mathbf{h}_v$ such that $\mathbf{y}_v$ can be predicted accurately. Analytically, the $\ell$-th layer of the GNN is defined as:
\begin{equation}
\label{eq:gnn}
\begin{split}
    \mathbf{h}_{v}^{(\ell+1)} &= \boldsymbol{f}_{\boldsymbol{\theta}}^{(\ell+1)} \Big(\mathbf{h}_{v}^{(\ell)}, \big\{\!\!\big\{ \mathbf{h}^{(\ell)}_u  \big\}\!\!\big\}_{u \in \mathcal{N}(v)} \Big) \\
    &= \Psi^{(\ell+1)}_{\boldsymbol{\theta}}\Big( \mathbf{h}_{v}^{(\ell)},  \Phi^{(\ell+1)}_{\boldsymbol{\theta}}\big(\big\{\!\!\big\{\mathbf{h}^{(\ell)}_u  \big\}\!\!\big\}_{u \in \mathcal{N}(v)}\big) \Big),
\end{split}
\end{equation}
where $\mathbf{h}_{v}^{(\ell)}$ denotes the embedding of node $v$ in the $\ell$-th layer, and $\mathbf{h}^{(0)}_{v}$ being initialized to $\mathbf{x}_v$ ($v$-th row in $\mathbf{X}$), and $\mathcal{N}(v)$ represents the set of neighborhoods for node $v$. Each layer of the GNN, i.e.  $\boldsymbol{f}_{\boldsymbol{\theta}}^{(\ell)}$, can be further decomposed into the aggregation function $\Phi^{(\ell)}_{\boldsymbol{\theta}}$ and the updating function $\Psi^{(\ell)}_{\boldsymbol{\theta}}$, and both functions can choose to use various functions in different types of GNNs.

\noindent\textbf{Distributed Training for GNNs.} Distributed GNN training first partitions the original graph into multiple subgraphs without overlap, which can also be considered mini-batches. Then different mini-batches are trained in different devices in parallel. Here, Eq.~\ref{eq:gnn} can be further reformulated as:
\begin{equation}
\label{eq:mini-batch}
\begin{split}
    \mathbf{h}_{v}^{(\ell+1)} = \boldsymbol{f}_{\boldsymbol{\theta}}^{(\ell+1)} \Big( \mathbf{h}_{v}^{(\ell)}, 
    &\underbrace{\big\{\!\!\big\{\mathbf{h}^{(\ell)}_u \big\}\!\!\big\}_{u \in  \mathcal{N}(v)   \cap \mathcal{S}(v)}}_{\text{In-subgraph nodes}} \\ \cup &\underbrace{\big\{\!\!\big\{\mathbf{h}^{(\ell)}_u  \big\}\!\!\big\}_{u \in \mathcal{N}(v) \setminus \mathcal{S}(v)}}_{\text{Out-of-subgraph nodes}}  \Big),
\end{split}
\end{equation}
where $\mathcal{S}(v)$ denotes the subgraph that node $v$ belongs to. In this paper, we consider the distributed training of GNNs with multiple local machines and a global server. The original input graph $\mathcal{G}$ is first partitioned into $M$ subgraphs, where each $\mathcal{G}_{m}(\mathcal{V}_{m},\mathcal{E}_{m})$ represents the $m$-th subgraph. Our goal is to find the optimal parameter $\boldsymbol{\theta}$ in a distributed manner by minimizing the global loss:
\begin{equation}
    {\min}_{\boldsymbol{\theta}}\;\; \mathcal{L}_{\text{global}}\big(\boldsymbol{\theta}\big) = {\sum}_{m=1}^{M} \mathbf{w}_m\cdot\mathcal{L}_{\text{local}}^{(m)}\big(\boldsymbol{\theta}\big),
    \label{eq: global loss}
\end{equation}
where $\mathbf{w}_m$ denotes the averaging weights and for each local loss and the local losses are given by:
\begin{equation}
\begin{split}
    \label{eq: local loss}    \mathcal{L}^{(m)}_{\text{local}}\big(\boldsymbol{\theta}\big) = \frac{1}{\vert\mathcal{V}_m \vert}{\sum}_{v\in\mathcal{V}_m} Loss\big(\mathbf{h}_{v}^{(L)},\mathbf{y}_{v}\big), \;\forall m.
\end{split}
\end{equation}

Existing methods in distributed training for GNNs can be classified into two categories, namely "\emph{partition-based}" and "\emph{propagation-based}". The "Partition-based" method~\cite{angerd2020distributed,jia2020improving,ramezani2021learn} generalizes the existing data parallelism techniques of classical distributed training on \emph{i.i.d} data to graph data and enjoys minimal communication cost. However, the embeddings of neighbor nodes (``out-of-subgraph nodes'' in Eq.~\ref{eq:mini-batch}) are dropped and the connections between subgraphs are thus ignored, which results in severe information loss. Hence, another line of work, namely the "propagation-based" method~\cite{ma2019neugraph,zhu2019aligraph,zheng2020distdgl,tripathy2020reducing,wan2022pipegcn} considers using communication of neighbor nodes for each subgraph (``out-of-subgraph nodes'' in Eq.~\ref{eq:mini-batch}) to satisfy GNN's neighbor aggregation, which minimizes the information loss. However, due to the \emph{neighborhood explosion} problem, inevitable communication overhead is incurred and plagues the achievable training efficiency.

\section{Problem Formulation}
\label{sec: problem formulation}

We follow the partition-parallel distributed training of GNNs defined in Eq.~\ref{eq: global loss}. Given the $m$-th graph partition $\mathcal{G}_m$, we reformulate Eq.~\ref{eq:mini-batch} in the matrix form as:
\begin{equation}
    \mathbf{H}_{in}^{(\ell+1,m)} = \boldsymbol{f}_{\boldsymbol{\theta}_m}^{(\ell+1)}\Big(\mathbf{H}_{in}^{(\ell,m)},\mathbf{H}_{out}^{(\ell,m)}\Big),
\label{eq:matrix form forward}
\end{equation}
where $\mathbf{H}_{in}^{(\ell,m)}$ and $\mathbf{H}_{out}^{(\ell,m)}$ denotes the in- and out-of-subgraph node embeddings at $\ell$-th layer on partition $\mathcal{G}_m$, respectively. As mentioned earlier, directly swapping $\mathbf{H}_{out}^{(\ell,m)}$ between each subgraph will result in exponential communication costs and harm the concurrency of distributed training. Existing historical-value-based methods approximate the out-of-subgraph embeddings by historical embeddings $\mathbf{\Tilde{H}}_{out}^{(\ell,m)}$, which result in a staleness error, i.e.,
\begin{equation}
    \delta\mathbf{\Tilde{H}}^{(\ell,m)} \coloneqq \big\Vert\mathbf{H}_{out}^{(\ell,m)} - \mathbf{\Tilde{H}}_{out}^{(\ell,m)}  \big\Vert.
    \label{eq: staleness def}
\end{equation}
In this work, we consider predicting $\mathbf{H}_{out}^{(\ell,m)}$ in an efficient and data-driven manner such that the predicted embedding $\mathbf{\hat{H}}_{out}^{(\ell,m)}$ has a smaller staleness than $\mathbf{\Tilde{H}}_{out}^{(\ell,m)}$. 
Despite the necessity, how to handle the above problem is an open research area due to several existing challenges: 1). The underlying evolution of the true embedding $\mathbf{H}_{out}^{(\ell,m)}$ is unknown and complicated due to the nature of GNN's computation and distributed training;  2). How to design the algorithm to reduce the staleness without hurting the efficiency and scalability of distributed training is highly non-trivial. 3). How the added staleness alleviation strategies will impact the model training process, such as the convergence and stability, is a difficult yet important question to address.

\begin{figure*}[t!]
  \begin{center}
    \includegraphics[width=0.98\textwidth]{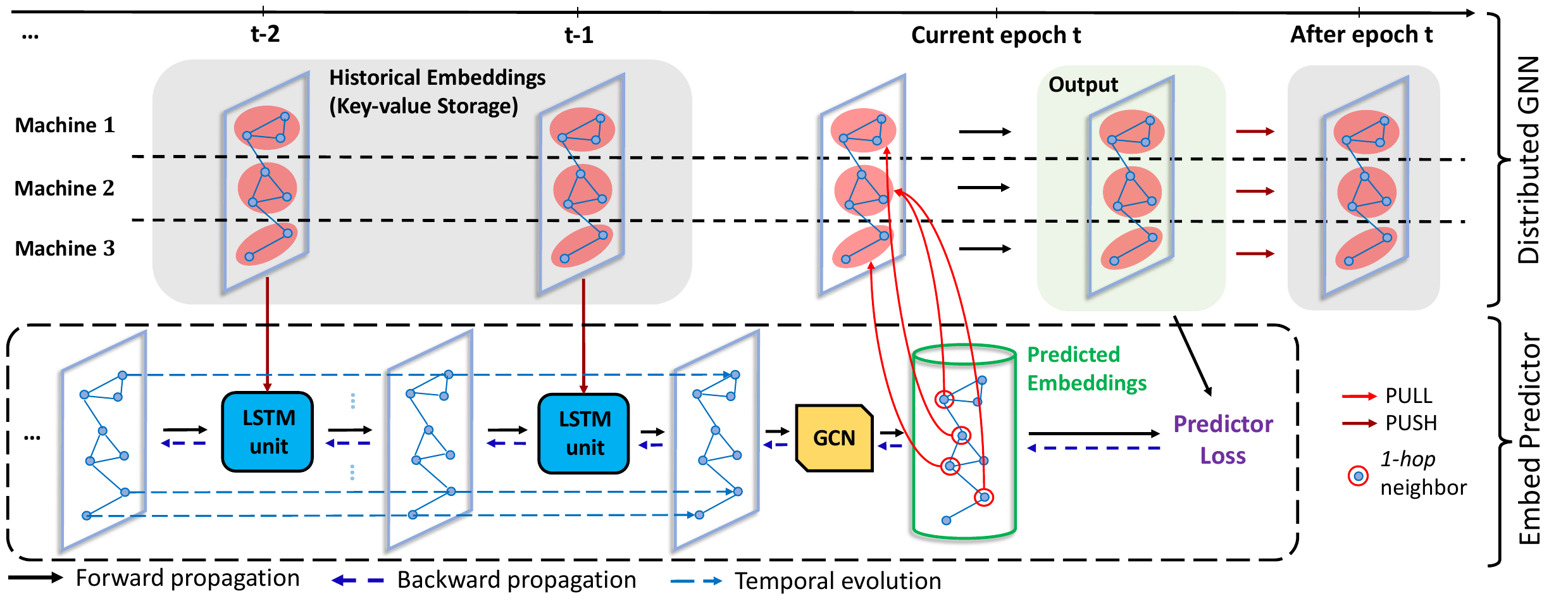}
  \end{center}
  \vspace{-3mm}
  \caption{\textbf{Overview of our proposed SAT framework} (per-GNN-layer view). The upper body depicts the parallel distributed training of the GNN, and the lower body depicts that our embedding predictor reduces the embedding staleness by modeling how embeddings evolve temporally and spatially. The predicted embeddings are pulled to each machine for highly concurrent training of the GNN, and its output during the forward pass serves as weak supervision to train the embedding predictor. Such a design decouples the computation of these two components and allows us to train them in an efficient online manner. Our proposed embedding predictor is general and method-agnostic, i.e., holding the potential for enhancing the quality and relevance of historical embeddings, which could lead to improvements in various applications and domains.}
  \label{fig: framework figure}
  \vspace{-3mm}
\end{figure*}

\section{Proposed Method}

This section introduces our framework Staleness-Alleviated Training (SAT) that jointly addresses the challenges above. We first present an overview of SAT, followed by introducing our proposed embedding predictor, a novel architecture that adaptively captures the evolution of node embeddings with specially designed input data and training objectives. Finally, we demonstrate the proposed online algorithm for optimizing the framework with a theoretical convergence guarantee.

\subsection{Overview of SAT}

Figure~\ref{fig: framework figure} provides the detailed end-to-end flow of SAT, where the embedding predictor reduces the staleness of historical embeddings by leveraging their evolution pattern over past epochs. This is achieved by modeling distributed GNN's embeddings as \emph{temporal graphs} such that the embedding predictor jointly captures the spatial and temporal evolution. The predicted embeddings serve as a better approximation of ground truth and are pulled by each machine as additional inputs in their parallel forward propagation, which can be expressed as:
\begin{equation}
\begin{split}
    \mathbf{H}_{in}^{(\ell+1,m)} \approx \boldsymbol{f}_{\boldsymbol{\theta}_{m}}^{(\ell+1)}\Big(\mathbf{H}_{in}^{(\ell,m)}, \underbrace{\boldsymbol{g}_{\boldsymbol{\omega}} \big(\mathcal{G}_{tmp}^{(m)}\big)}_{\text{Predicted embeddings}}\Big),
\end{split}
\label{eq: sat forward}
\end{equation}
where $\boldsymbol{g}_{\boldsymbol{\omega}}(\cdot)$ denotes the embedding predictor, and $\mathcal{G}_{tmp}^{(m)}$ represents the temporal graph for the $m$-th partition. Note that the pulled predicted embeddings are only for out-of-subgraph nodes hence the communication cost is kept low. The entire framework can be trained via an online algorithm.

\subsection{Embedding Predictor} 

Here we introduce our proposed embedding predictor $\boldsymbol{g}_{\boldsymbol{\omega}}(\cdot)$ which aims to alleviate the staleness error defined in Eq.~\ref{eq: staleness def}. Our contribution is \emph{2-fold}: First, we innovatively formulate the embedding prediction task as modeling the temporal graphs induced by the distributed GNN over different epochs. The temporal graphs fully characterize the underlying evolution of node embeddings, thus enabling our predictor to predict the target embeddings with sufficient information. Second, to jointly optimize the embedding predictor and the induced temporal graphs, we propose a new multi-task learning loss where each task corresponds to the embedding prediction on each graph partition.

\noindent\textbf{Temporal Graphs Induced by Distributed GNNs.}
Recall the forward propagation of distributed GNN defined in Eq.~\ref{eq:matrix form forward}, our goal is to predict the out-of-subgraph embeddings for the current training epoch such that the predicted embeddings offer a better approximation than historical embeddings. Intuitively, every local GNN recursively computes the node embeddings by message passing across multiple layers, while the usage of historical embeddings further introduces the previous epoch's information. Hence, our goal requires the proposed embedding predictor to jointly capture the spatial (\emph{across-layer}) and temporal (\emph{across-epoch}) evolution of the node embeddings.

To this end, we need to define the input data for the embedding predictor such that the data sufficiently characterizes the underlying evolution of embeddings. To see this, consider the subgraph $\mathcal{\Bar{G}}_{m}= (\mathcal{\Bar{V}}_{m},\mathcal{\Bar{E}}_{m})$ induced by the $m$-th graph partition $\mathcal{G}_{m}$ and its 1-\emph{hop} neighborhoods, where the edges between them are preserved as in the original raw graph.
Denote the in- and out-of-subgraph embeddings computed by the local GNN model at epoch $t$ as $\mathbf{H}_{in}^{(t,\ell,m)}$ and $\mathbf{H}_{out}^{(t,\ell,m)}$, where $\ell$ goes from $1$ to $L$. 
The key observation here is that if we plug these node embeddings into $\mathcal{\Bar{G}}_{m}$ as node weights, we can define a \emph{temporal graph} $\{\mathcal{\Bar{G}}^{(t)}_{m}\}_{t\in\mathcal{T}}$ as:
\begin{equation}
\begin{split}
    \big\{\mathcal{\Bar{G}}^{(t)}_{m}\big\}_{t\in\mathcal{T}} \coloneqq \Big\{ & \Big(\mathcal{\Bar{V}}_{m},\mathcal{\Bar{E}}_{m},\{\mathbf{H}_{in}^{(t,\ell,m)}\}_{\ell=1}^L,  \\
    &\{\mathbf{H}_{out}^{(t,\ell,m)}\}_{\ell=1}^L\Big): t=1,2,\cdots,T\Big\},
\end{split}
\label{eq: temporal graph def}
\end{equation}
where $T$ denotes the total number of epochs. In temporal graph $\{\mathcal{\Bar{G}}^{(t)}_{m}\}_{t\in\mathcal{T}}$, each node and edge is the same as in the original full graph, and the timestamp $t$ is defined as each training epoch.
Based on Eq.~\ref{eq: temporal graph def}, the entire set of temporal graphs induced by the distributed GNN can be defined as $\{\mathcal{\Bar{G}}^{(t)}\}_{t\in\mathcal{T}} \coloneqq \big\{\{\mathcal{\Bar{G}}^{(t)}_{m}\}_{t\in\mathcal{T}}: m=1,2,\cdots, M \big\}$,
where $M$ denotes the number of subgraphs.

\noindent\textbf{Multi-task Learning of Embedding Predictor.}
Given the temporal graph defined above, our \emph{goal} is to build an embedding predictor that proactively captures the evolving pattern and predicts embeddings for the current epoch.
The key observation here is that each subgraph induces a temporal graph and we want to jointly train an embedding predictor that is able to make predictions for any subgraphs. Multi-task Learning (MTL~\cite{zhang2021survey}), which jointly trains a model on multiple different tasks to improve the generalization ability, provides a suitable option for our problem. 
Formally, to learn the embedding predictor $\boldsymbol{g}_{\boldsymbol{\omega}}$ parameterized by $\boldsymbol{\omega}$ we optimize the following objective at epoch $t$:
\begin{equation}
\begin{split}
    {\min}_{\boldsymbol{\omega}_t}\;\; \frac{1}{M}  & \sum_{m=1}^{M} \Big\Vert \boldsymbol{g}_{\boldsymbol{\omega}_t} \big(\{\mathcal{\Bar{G}}^{(s)}_{m}\}_{t-\tau\leq s\leq t-1}\big) - \mathbf{H}^{(t,m)} \Big\Vert, \\
    &\text{s.t.,}\;\; \delta\mathbf{\Hat{H}}^{(t,m)} < \delta\mathbf{H}^{(t-1,m)},\;\forall m,
\end{split}
\label{eq: problem formulation}
\end{equation}
where $\mathbf{H}^{(t,m)}$  denotes the concatenation of in- and out-of-subgraph embeddings without any staleness, $\mathbf{\Hat{H}}^{(t,m)}$ is a compact notation of the output by $\boldsymbol{g}_{\boldsymbol{\omega}_t}$, and $\tau$ denotes the length of the \emph{sliding window} where we restrain the mapping function can only access up to $\tau$ steps of historical information.

In this work, we consider combining GNNs with \emph{recurrent structures} as embedding predictors. The GNN captures the information within the node dependencies while the recurrent structure captures the information within their temporal evolution.
Specifically, we consider implementing our embedding predictor $\boldsymbol{g}_{\boldsymbol{\omega}}$ as an RNN-GNN~\cite{wu2022graph}. The combination of LSTM and GCN is what we found empirically the optimal trade-off between efficiency and capacity in most cases, while the RNN-GNN is generic for many different variants (LSTM-GAT, GRU-GCN, etc.)

\subsection{Optimization Algorithm}

Here we demonstrate the general loss function of SAT and the proposed optimization algorithm. The detailed algorithm is shown in Algorithm~\ref{alg: sat}.
With the additional embedding predictor, the training procedure for SAT is more complicated than standard distributed GNN training. As a result, jointly optimizing the GNN and embedding predictor is tricky because the forward passes of the two models are \emph{coupled}. To see this, the overall loss of SAT can be expressed as a \emph{nested} optimization as follows:
\begin{equation}
\begin{split}
    \forall m, \;\; &{\boldsymbol{\theta}_{m}^{*}} = \arg{\min}_{{\boldsymbol{\theta}_m}} \;\mathcal{L}^{(m)}_{\text{local}}\big(\boldsymbol{\theta}_{m}, \boldsymbol{\omega}_{t}^{*} \big), \\
    &\text{s.t.,}\;\; \boldsymbol{\omega}_{t}^{*} = \arg{\min}_{\boldsymbol{\omega}_{t}} \;\mathcal{L}_{\text{Pred}}\big(\boldsymbol{\omega}_{t}, \{\boldsymbol{\theta}_{m}^{*}\}_{m=1}^M \big).  
\end{split}
\label{eq: overall loss}
\end{equation}
By following Eq.~\ref{eq: global loss} and plugging our embedding predictor into Eq.~\ref{eq:mini-batch}, the distributed GNN's loss is:
\begin{equation}
\begin{split}
\mathcal{L}^{(m)}_{\text{local}}\big(\boldsymbol{\theta}_{m}, \boldsymbol{\omega}_{t}^{*}\big) = \frac{1}{\vert\mathcal{V}_m \vert}{\sum}_{v\in\mathcal{V}_m} Loss\big(\mathbf{h}_{v}^{(L)},\mathbf{y}_{v}\big),        
\end{split}
\label{eq: distributed gnn loss}
\end{equation}
where it recursively satisfies:
\begin{equation}
\begin{split}
    \mathbf{h}_{v}^{(L)} =& \boldsymbol{f}_{\boldsymbol{\theta}_{m}}^{(L)} \Big(\big\{\!\!\big\{ \mathbf{h}^{(L-1)}_u  \big\}\!\!\big\}_{u \in \mathcal{N}(v)  \cap \mathcal{S}(v)}, \\
    &\big\{\!\!\big\{\boldsymbol{g}_{\boldsymbol{\omega}_{t}^{*}} \big(\{\mathcal{\Bar{G}}^{(s)}_{m}\}_{t-\tau\leq s\leq t-1}\big)\big\}\!\!\big\}_{u \in \mathcal{N}(v)  \setminus \mathcal{S}(v)}^{L-1} \Big).
\end{split}
\end{equation}
By plugging the distributed GNN's forward pass into Eq.~\ref{eq: problem formulation}, the embedding predictor's loss is:
\begin{equation}
\begin{split}
    \mathcal{L}_{\text{Pred}}\big(\boldsymbol{\omega}_{t}, \{\boldsymbol{\theta}_{m}^{*}\} \big)  &= \frac{1}{M}{\sum}_{m} \Big\Vert \boldsymbol{g}_{\boldsymbol{\omega}_t} \Big(\{\mathcal{\Bar{G}}^{(s)}_{m}\}_{t-\tau}^{t-1}\Big) - \\
     \Big\{&\boldsymbol{f}_{\boldsymbol{\theta}_{m}^{*}}^{(\ell+1)} \big(\mathbf{H}_{in}^{(t,\ell,m)},\mathbf{\Hat{H}}_{out}^{(t.\ell,m)}\big) \Big\}_{\ell=0}^{L-1}\Big\Vert.
\end{split}
\label{eq: predictor loss}
\end{equation}
The key observation here is that the predicted embeddings $\boldsymbol{g}_{\boldsymbol{\omega}_t} (\{\mathcal{\Bar{G}}^{(s)}_{m}\}_{t-\tau\leq s\leq t-1})$ have a \emph{2-fold} functionality: 1) it serves as an additional input in the forward propagation of the distributed GNN, and 2) it serves as the prediction to calculate the loss function of the embedding predictor. Noticing this, we propose an online algorithm to train the GNN model and the embedding predictor \emph{alternatively}, which decouples the 2-fold functionality and allows easier optimization. The pseudo-code for SAT's training procedure is provided in Algorithm~\ref{alg: sat}, and a more detailed optimization process of our online algorithm can be found in the appendix.


\begin{algorithm}[t!]
\caption{Staleness-alleviated Distributed GNN Training}\label{alg: sat}
\begin{algorithmic}[1]
\Require GNN learning rate $\eta_1$, embedding predictor learning rate $\eta_2$, update frequency $\Delta T$.
\State /*\; \emph{Partitioning the raw graph} \;*/
\State $\{\mathcal{G}_{m}(\mathcal{V}_{m},\mathcal{E}_{m}), m = 1,..,M\}$ $\leftarrow$ METIS($\mathcal{G}$) 
\For{$t = 1...T$}
    \For{$m=1...M$ in parallel}   
        \For{$\ell = 1...L$}
            \State Pull $\mathbf{\Hat{H}}_{out}^{(\ell,m)}$ to local machines    

            \State Forward prop for local GNNs as Eq.~\ref{eq: sat forward}   
             \State Push computed embeddings to the server    
             
                    

    \EndFor
         \State /*\; \emph{Local GNN update} \;*/
         \State Compute local loss as Eq.~\ref{eq: distributed gnn loss} and gradients $\bigtriangledown \boldsymbol{\theta}^{(t)}_m$
         \State $\boldsymbol{\theta}^{(t+1)}_m = \boldsymbol{\theta}^{(t)}_m - \eta_1 \cdot \bigtriangledown \boldsymbol{\theta}^{(t)}_m$
    \EndFor
        \State /*\; \emph{Global GNN update} \;*/
        \State $\boldsymbol{\theta}^{(t+1)} \leftarrow$ \textbf{AGG}($\boldsymbol{\theta}^{(t + 1)}_1...\boldsymbol{\theta}^{(t + 1)}_M$) 
        \State /*\; \emph{Embedding predictor update} \;*/
        \If{$t \;\%\; \Delta T ==  0$}  
        \State Compute embedding predictor loss as Eq.~\ref{eq: predictor loss}  

        \State $\boldsymbol{\omega}_{t+1}= \boldsymbol{\omega}_{t} - \eta_2 \cdot \bigtriangledown \mathcal{L}_{Pred}(\boldsymbol{\omega}_t)$
        \State Update predicted embeddings by $g_{\boldsymbol{\omega}_{t+1}}$
        \EndIf
\EndFor
\end{algorithmic}
\end{algorithm}


\noindent\textbf{Optimize distributed GNN $\{\boldsymbol{\theta}_{m}\}$.}
Given embedding predictor parameters $\boldsymbol{\omega}_{t}^{*}$, the predicted out-of-subgraph embeddings (last term in the second equation of Eq.~\ref{eq: distributed gnn loss} becomes \emph{constant}, since the temporal graphs contain historical information which can be regarded as additional input in the current epoch. Hence, the optimization of Eq.~\ref{eq: distributed gnn loss} with respect to $\{\boldsymbol{\theta}_{m}\}$ can be directly solved by gradient descent algorithms. As we define our temporal graphs for each graph partition, Eq.~\ref{eq: distributed gnn loss} can still be solved in parallel between $m$. Meanwhile, the backpropagation of Eq.~\ref{eq: distributed gnn loss} actually involves ALL neighbors' information (though predicted instead of ground truth) since the gradient also depends on the predicted out-of-subgraph embeddings. In other words, both in- and out-of-subgraph node dependencies are considered during the backpropagation.

\noindent\textbf{Optimize embedding predictor $\boldsymbol{\omega}_{t}$.}
Similarly, given $\boldsymbol{\theta}_{m}^*$, the optimization of $\boldsymbol{\omega}_{t}$ can be done by gradient descent on Eq.~\ref{eq: predictor loss}. Here we introduce two techniques to improve the training efficiency of the embedding predictor. First, it has been shown that \emph{early stopping} in SGD can be regarded as implicit regularization~\cite{rajeswaran2019meta}, and we adopt this technique over $\boldsymbol{\omega}_{t}$ to help reduce the computational cost to train the predictor. 
To further enhance the efficiency and flexibility of the training, we extend the loss to \emph{stochastic} version by sampling mini-batches as:
\begin{equation}
    {\min}_{\boldsymbol{\omega}_t}  \frac{1}{M}\sum_{m} \frac{1}{\vert \mathcal{B}_m \vert} \sum_{u\in\mathcal{B}_m} \mathcal{L}_{\text{Pred}}^{(m)}\big(\boldsymbol{\omega}_{t}, \boldsymbol{\theta}_{m}^{*} \big).
    \label{eq: mini-batch training}
\end{equation}

\subsection{Data Compression for Historical Embedding Storage}

In SAT, our embedding predictor aims to reduce the staleness of historical embeddings based on the induced temporal graph, by optimizing the following loss function:
\begin{equation}
\begin{split}
    {\min}_{\boldsymbol{\omega}_t}\;\; \frac{1}{M}\sum_{m=1}^{M} & \Big\Vert \boldsymbol{g}_{\boldsymbol{\omega}_t} \big(\{\mathcal{\Bar{G}}^{(s)}_{m}\}_{t-\tau\leq s\leq t-1}\big) - \mathbf{H}^{(t,m)} \Big\Vert,\\
    &\text{s.t.,}\;\; \delta\mathbf{\Hat{H}}^{(t,m)} < \delta\mathbf{H}^{(t-1,m)}.
\end{split}
\end{equation}
Although we introduce the hyperparameter $\tau$ as the length of the \emph{sliding window} to obtain a constant memory complexity concerning $t$, the memory cost of the temporal graph can still be prohibitive when the original graph is huge. To this end, we borrow the idea of data compression to reduce the cost of storage for historical embeddings.

In this work, we design a simple yet effective compression scheme based on the \emph{Encoded Polyline Algorithm}~\footnote{\url{https://developers.google.com/maps/documentation/utilities/polylinealgorithm}} (or polyline encoding.) Polyline encoding is a lossy compression algorithm that converts a rounded binary value into a series of character codes for ASCII characters using the base64 encoding scheme. It can be configured to maintain a configurable precision by rounding the value to a specified decimal place. The model could achieve the largest communication savings and minor performance loss with the appropriate precision. We empirically found that, by choosing proper parameters for the compression algorithm, we can achieve up to \textbf{3.5$\times$} of compression ratio for the historical embeddings. Hence, although historical embeddings of more than 1 epoch are stored, the overall memory cost of SAT is still comparable to the baselines (please refer to Table 2 of our main text.)

As a byproduct, the compression algorithm can also help reduce the communication cost of SAT. To see this, the pull and push operations as shown in our Algorithm 1 (main text) can both enjoy a reduced communication cost if we compress the node embeddings by using the polyline encoding before triggering the pull/push operations. After the communication, we can decompress the embeddings and follow the procedures as defined in SAT.

\subsection{Theoretical Analyses}

\noindent\textbf{Convergence Analyses.}
Here we analyze the convergence of SAT. We show that the global GNN model can converge under the distributed training setting with the embedding predictor. Due to limited space, more details such as the proof can be found in Section~\ref{sec: theory proof}.

\begin{theorem}
Consider a GCN with $L$ layers that are $L_f$-Lipschitz smooth. $\forall\;\epsilon > 0$, $\exists$ constant $E>0\;$ such that, we can choose a learning rate $\eta=\sqrt{M\epsilon}/{E}$ and number of training iterations $T=(\mathcal{L}(\boldsymbol{\theta}^{(1)})-\mathcal{L}(\boldsymbol{\theta}^{*}))E M^{-\frac{1}{2}}\epsilon^{-\frac{3}{2}}$, $s.t.$,
\begin{equation}
    \frac{1}{T}\sum_{t=1}^T \left\Vert \nabla \mathcal{L}(\boldsymbol{\theta}^{(t)}) \right\Vert^{2}_{2} \leq \mathcal{O}\left(\frac{1}{T^{\frac{2}{3}}M^{\frac{1}{3}}}\right),
\end{equation}
where $\boldsymbol{\theta}^{*}$ denotes the optimal parameter for the global GCN.
\label{thm: sync convergence}
\end{theorem}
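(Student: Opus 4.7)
The plan is to establish a standard non-convex stochastic gradient convergence bound while carefully accounting for the bias introduced by using predicted embeddings in place of true out-of-subgraph embeddings. The natural starting point is the Lipschitz smoothness of the GCN loss $\mathcal{L}$, which yields the descent inequality
\begin{equation*}
\mathcal{L}(\boldsymbol{\theta}^{(t+1)}) \leq \mathcal{L}(\boldsymbol{\theta}^{(t)}) + \big\langle \nabla \mathcal{L}(\boldsymbol{\theta}^{(t)}),\, \boldsymbol{\theta}^{(t+1)}-\boldsymbol{\theta}^{(t)}\big\rangle + \frac{L_f}{2}\big\Vert \boldsymbol{\theta}^{(t+1)}-\boldsymbol{\theta}^{(t)}\big\Vert^{2}.
\end{equation*}
Substituting the aggregated SAT update rule $\boldsymbol{\theta}^{(t+1)} = \boldsymbol{\theta}^{(t)} - \eta\, \tilde{g}^{(t)}$, where $\tilde{g}^{(t)}$ is the averaged mini-batch gradient evaluated with $\boldsymbol{g}_{\boldsymbol{\omega}_t^{*}}$ replacing the exact out-of-subgraph embeddings, I would decompose $\tilde{g}^{(t)} = \nabla \mathcal{L}(\boldsymbol{\theta}^{(t)}) + b^{(t)} + v^{(t)}$ where $b^{(t)}$ is the deterministic bias induced by the embedding predictor and $v^{(t)}$ is zero-mean stochastic noise due to mini-batching across $M$ workers.

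Next I would control each perturbation term. For $v^{(t)}$, standard i.i.d.\ worker sampling gives $\mathbb{E}\Vert v^{(t)}\Vert^2 \leq \sigma^2/M$, reflecting the variance reduction from $M$-way parallelism that is responsible for the $M$-dependence in the final rate. For $b^{(t)}$, I would use the layerwise Lipschitz smoothness together with the staleness-reduction constraint in Eq.~(\ref{eq: problem formulation}) to propagate the predictor's output error $\delta\hat{\mathbf{H}}^{(t,m)}$ through the remaining layers and bound $\Vert b^{(t)}\Vert$ by a multiple of the predictor's loss at epoch $t$. Taking expectations in the descent inequality, rearranging to isolate $\Vert \nabla \mathcal{L}(\boldsymbol{\theta}^{(t)})\Vert^{2}$, and summing from $t=1$ to $T$ produces a telescoping bound of the schematic form
\begin{equation*}
\frac{1}{T}\sum_{t=1}^{T} \mathbb{E}\big\Vert \nabla \mathcal{L}(\boldsymbol{\theta}^{(t)})\big\Vert^{2} \;\leq\; \frac{\mathcal{L}(\boldsymbol{\theta}^{(1)})-\mathcal{L}(\boldsymbol{\theta}^{*})}{\eta T} + \frac{L_f\,\eta\,\sigma^{2}}{M} + C_b,
\end{equation*}
where $C_b$ aggregates the squared predictor bias over the run.

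The final step is the standard parameter-balancing calculation: choose $\eta = \sqrt{M\epsilon}/E$ with $E$ absorbing $L_f$, $\sigma^{2}$, and the predictor constant, so that the first two terms on the right-hand side both scale as $\sqrt{\epsilon/M}$. Plugging in $T = (\mathcal{L}(\boldsymbol{\theta}^{(1)})-\mathcal{L}(\boldsymbol{\theta}^{*}))\, E\, M^{-1/2} \epsilon^{-3/2}$ and simplifying the resulting expression in terms of $T$ and $M$ recovers the advertised $\mathcal{O}(T^{-2/3}M^{-1/3})$ rate, matching the rate of linear-speedup asynchronous SGD on non-convex objectives.

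The main obstacle I anticipate is rigorously bounding the predictor bias $b^{(t)}$. It requires (i) tracking how the out-of-subgraph embedding error at layer $\ell$ is amplified through the subsequent $L-\ell$ GCN layers via their Lipschitz constants, and (ii) arguing that the constraint $\delta \hat{\mathbf{H}}^{(t,m)} < \delta \mathbf{H}^{(t-1,m)}$ from the multi-task objective actually holds (or holds on average) throughout the online alternating procedure, since Algorithm~\ref{alg: sat} only refreshes $\boldsymbol{\omega}_t$ every $\Delta T$ steps and performs early-stopped SGD rather than exact inner minimization. A clean way to handle this is to posit a uniform bound on the predictor's generalization error across epochs, so that $C_b$ is dominated by the variance term as $\epsilon \to 0$; the remainder of the argument then reduces to routine non-convex SGD bookkeeping.
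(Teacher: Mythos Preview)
Your overall skeleton (descent lemma, decompose the effective gradient, telescope, then balance $\eta$ and $T$) matches the paper, but the decomposition you propose misidentifies where the error and the $M$-dependence come from, and as a result your intermediate bound does not produce the stated rate.

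In the paper's setting the $M$ workers hold a \emph{deterministic} partition of the graph, and the paper uses the identity $\nabla\mathcal{L}(\boldsymbol{\theta}^{(t)})=\tfrac{1}{M}\sum_{m}\nabla\mathcal{L}_m(\boldsymbol{\theta}^{(t)}_m)$ exactly; there is no i.i.d.\ mini-batch sampling across workers and hence no zero-mean noise term $v^{(t)}$ with variance $\sigma^2/M$. The only deviation from the true gradient is the staleness bias $\delta_m^{(t)}=\nabla\tilde{\mathcal{L}}_m-\nabla\mathcal{L}_m$. The key step you are missing is that this bias is itself of order $\eta$: because the stale (predicted) embeddings and intermediate gradients differ from the current ones by one update, the paper (invoking the layerwise Lipschitz assumptions and Corollary~A.10 of \cite{wan2022pipegcn}) obtains $\big\Vert \tfrac{1}{M}\sum_m \delta_m^{(t)}\big\Vert_2^2 \le \eta^2 E^2/M$. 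Plugging this into the descent lemma gives
\[
\frac{1}{T}\sum_{t=1}^T\Vert\nabla\mathcal{L}(\boldsymbol{\theta}^{(t)})\Vert_2^2 \;\le\; \frac{2(\mathcal{L}(\boldsymbol{\theta}^{(1)})-\mathcal{L}(\boldsymbol{\theta}^{*}))}{\eta T}+\frac{\eta^2 E^2}{M},
\]
and it is precisely the $\eta^2$ (not $\eta$) scaling that makes the choices $\eta=\sqrt{M\epsilon}/E$ and $T=(\mathcal{L}(\boldsymbol{\theta}^{(1)})-\mathcal{L}(\boldsymbol{\theta}^{*}))E M^{-1/2}\epsilon^{-3/2}$ balance both terms at $\epsilon$, yielding $\mathcal{O}(T^{-2/3}M^{-1/3})$. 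Your bound $\tfrac{A}{\eta T}+\tfrac{L_f\eta\sigma^2}{M}+C_b$ instead balances at $\mathcal{O}((MT)^{-1/2})$ (plus a non-vanishing $C_b$), so the specified $\eta,T$ do not recover the theorem's rate, and your plan to let the variance term ``dominate'' $C_b$ goes in the wrong direction: the bias is the whole story here, and you must show it is $\mathcal{O}(\eta)$, not merely uniformly bounded.
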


\noindent\textbf{Communication Cost Analyses.} Here we analyze the communication cost of SAT as depicted in Algorithm~\ref{alg: sat}. 
Without loss of generality, consider a GNN with $L$ layers and a fixed width $d$. 
SAT's communication cost \emph{per round} can be expressed as:
\begin{equation}
    \mathcal{O}\big(M L d^2 + \sum\nolimits_{m=1}^{M}\left\vert \cup_{v\in\mathcal{V}_{m}}\mathcal{N}(v)\setminus \mathcal{V}_m \right\vert L d + N L d\big),
    \label{eq: communication complexity}
\end{equation}
where the first term denotes the cost for pull/push of GNN parameters or gradients, the second term and the third term represent the cost for pull and push of embeddings, where $N$ is the raw graph size. 
To reduce the accumulated communication and computation cost, we consider decreasing the frequency of finetuning the embedding predictor by a factor every $\Delta T$ epoch (Line 17 in Algorithm~\ref{alg: sat}). Such a design for periodic updates of the embedding predictor can greatly reduce the computational and communication overhead by a factor of $\Delta T$.

\begin{table}[t!]
\scriptsize
\centering
\caption{Summary of dataset statistics.}
\label{tab: dataset}
\begin{tabular}{llllll}
\hline
Dataset      & \#~Nodes   & \#~Edges     & \#~Features & \#~Classes \\ \hline
Flickr       & 89,250    & 899,756     & 500        & 7            \\
Reddit       & 232,965   & 23,213,838  & 602        & 41           \\
OGB-Arixv    & 169,343   & 2,315,598   & 128        & 40           \\
OGB-Products & 2,449,029 & 123,718,280 & 100        & 47           \\ 
OGB-Papers100M & 111,059,956 & 1,615,685,872 & 128        & 172           \\ 
\hline
\end{tabular}%
\end{table}

\begin{table*}[t!]
\small
\centering
\caption{Performance comparison (test F1 score) of distributed GNN frameworks. The mean and standard deviation are calculated based on multiple runs with varying seeds. The subscript under each dataset's name denotes the number of partitions we used. The best and second best methods are marked with \textbf{bold} and \underline{underline}, respectively. SAT consistently boosts performance by alleviating the staleness of historical embeddings. Also, our SAT favors the bigger number of partitions.}
\begin{tabular}{llccccccc}
\toprule
\textbf{Backbone} & \textbf{Method} & \textbf{Flickr$_{(4)}$} & \textbf{Reddit$_{(4)}$} & \textbf{ogb-arxiv$_{(4)}$} & \textbf{ogb-products$_{(8)}$} & \textbf{Flickr$_{(8)}$} & \textbf{Reddit$_{(8)}$} & \textbf{ogb-arxiv$_{(8)}$} \\
\midrule 
\multirow{10}{*}{GCN} 
& LLCG    & 50.73\scriptsize$\pm$0.15  &  62.09\scriptsize$\pm$0.41  &  69.80\scriptsize$\pm$0.21   &  76.87\scriptsize$\pm$0.32   & 50.53\scriptsize$\pm$0.20 &  61.80\scriptsize$\pm$0.38 &  69.62\scriptsize$\pm$0.24 \\
& DistDGL   & 50.90\scriptsize$\pm$0.13  &  87.02\scriptsize$\pm$0.23 & 69.90\scriptsize$\pm$0.17 &  77.52\scriptsize$\pm$0.28   & 50.70\scriptsize$\pm$0.18 &  86.82\scriptsize$\pm$0.28 &  69.70\scriptsize$\pm$0.19  \\
& BNS-GCN   & 51.76\scriptsize$\pm$0.21  &  93.76\scriptsize$\pm$0.43 & 55.49\scriptsize$\pm$0.17 &  78.47\scriptsize$\pm$0.31   & 51.56\scriptsize$\pm$0.19 &  96.56\scriptsize$\pm$0.39 & 55.39\scriptsize$\pm$0.14  \\
& SANCUS   & 52.17\scriptsize$\pm$0.31  &  93.81\scriptsize$\pm$0.36 & 70.05\scriptsize$\pm$0.25 &  78.95\scriptsize$\pm$0.36   & 52.07\scriptsize$\pm$0.29 &  93.61\scriptsize$\pm$0.33 & 69.85\scriptsize$\pm$0.28 \\
& DistGNN & 51.89\scriptsize$\pm$0.21  &  94.23\scriptsize$\pm$0.43  &  71.54\scriptsize$\pm$0.23   &  77.36\scriptsize$\pm$0.42    & 53.38\scriptsize$\pm$0.17 &  95.03\scriptsize$\pm$0.39 &  71.90\scriptsize$\pm$0.19  \\
& \; \emph{w/ SAT (Ours)}    & 52.73\scriptsize$\pm$0.19 &  95.21\scriptsize$\pm$0.44 &  \underline{72.13\scriptsize$\pm$0.37}  & 78.97\scriptsize$\pm$0.33  & \textbf{53.81\scriptsize$\pm$0.24} &  95.21\scriptsize$\pm$0.41 &  \underline{72.03\scriptsize$\pm$0.3}  \\
& DIGEST    & 52.92\scriptsize$\pm$0.32 &  94.55\scriptsize$\pm$0.37   &  71.90\scriptsize$\pm$0.16   & 78.56\scriptsize$\pm$0.29        & 53.00\scriptsize$\pm$0.28 &  94.35\scriptsize$\pm$0.34 &  71.80\scriptsize$\pm$0.23  \\
&  \; \emph{w/ SAT (Ours)}   &  \underline{53.45\scriptsize$\pm$0.27}  &  95.25\scriptsize$\pm$0.35   &   \textbf{72.39\scriptsize$\pm$0.21}   &   \underline{79.43\scriptsize$\pm$0.30}     &     52.36\scriptsize$\pm$0.18  &  94.49\scriptsize$\pm$0.26    &    \textbf{72.32\scriptsize$\pm$0.21}  \\
& PipeGCN    &  52.19\scriptsize$\pm$0.26 & \underline{96.20\scriptsize$\pm$0.38}  &   53.42\scriptsize$\pm$0.19 & 79.23\scriptsize$\pm$0.28      & 52.09\scriptsize$\pm$0.24 &  \underline{96.77\scriptsize$\pm$0.36} &  53.32\scriptsize$\pm$0.17  \\
&  \; \emph{w/ SAT (Ours)}    & 53.39\scriptsize$\pm$0.19   &    \textbf{97.02\scriptsize$\pm$0.31}      &    55.50\scriptsize$\pm$0.24  &          \textbf{80.21\scriptsize$\pm$0.34}   & 53.29\scriptsize$\pm$0.17 &  \textbf{96.92\scriptsize$\pm$0.34} &  53.45\scriptsize$\pm$0.15    \\
\midrule
\multirow{7}{*}{GAT} 
& LLCG   &  48.69\scriptsize$\pm$0.32     & 91.10\scriptsize$\pm$0.17     &  68.84\scriptsize$\pm$0.22    &  76.87\scriptsize$\pm$0.32   &  47.95\scriptsize$\pm$0.38     & 91.03\scriptsize$\pm$0.18     &  68.75\scriptsize$\pm$0.29 \\
& DistDGL   &  51.50\scriptsize$\pm$0.27    & 92.88\scriptsize$\pm$0.15  &  70.44\scriptsize$\pm$0.20 &  77.72\scriptsize$\pm$0.32   &  51.39\scriptsize$\pm$0.31    & 92.80\scriptsize$\pm$0.18  &  70.32\scriptsize$\pm$0.24  \\
& SANCUS   & 52.09\scriptsize$\pm$0.29    & 93.46\scriptsize$\pm$0.21  &  66.64\scriptsize$\pm$0.37 &  78.75\scriptsize$\pm$0.33   & 51.96\scriptsize$\pm$0.27    & 93.41\scriptsize$\pm$0.23  &  66.59\scriptsize$\pm$0.33 \\
& DistGNN &   52.38\scriptsize$\pm$0.25   & 94.29\scriptsize$\pm$0.33  &  68.08\scriptsize$\pm$0.38  &  78.26\scriptsize$\pm$0.26   &   52.31\scriptsize$\pm$0.29   & 94.20\scriptsize$\pm$0.21  &  67.93\scriptsize$\pm$0.31  \\
& \; \emph{w/ SAT (Ours)}    & 53.11\scriptsize$\pm$0.22 &  94.75\scriptsize$\pm$0.30 &  70.03\scriptsize$\pm$0.32 & 79.13\scriptsize$\pm$0.29  &     53.10\scriptsize$\pm$0.21     & 94.69\scriptsize$\pm$0.18  &  68.34\scriptsize$\pm$0.30  \\
& DIGEST    & 53.25\scriptsize$\pm$0.35 &  94.39\scriptsize$\pm$0.18   &  71.70\scriptsize$\pm$0.18   & 77.06\scriptsize$\pm$0.39         & 53.09\scriptsize$\pm$0.15    & 94.15\scriptsize$\pm$0.21 &  69.14\scriptsize$\pm$0.17  \\
&  \; \emph{w/ SAT (Ours)}   &  \textbf{53.65\scriptsize$\pm$0.32}  &  95.11\scriptsize$\pm$0.35   &   71.89\scriptsize$\pm$0.25   &   78.57\scriptsize$\pm$0.30    & \underline{53.57\scriptsize$\pm$0.23} &  95.02\scriptsize$\pm$0.27 &  71.77\scriptsize$\pm$0.21  \\
\bottomrule
\end{tabular}%
\label{tab: distributed training results table}
\end{table*}

\section{Experiment}
\label{sec:experiment}

In this section, we evaluate our proposed framework SAT with various experiments. For all distributed experiments, we simulate a distributed training environment using an EC2 {\texttt{g4dn.metal}} virtual machine (VM) instance on AWS, which has $8$ NVIDIA T4 GPUs, $96$ vCPUs, and $384$~GB main memory. We implemented the shared-memory KVS using the Plasma~\footnote{\url{https://arrow.apache.org/docs/python/plasma.html}} for embedding storage and retrieval. Our source code for the preprint version can be found at~\url{https://anonymous.4open.science/r/SAT-CA0C}.~\footnote{Due to the preprint status of our work, we are currently releasing only a partial version of our paper's resource code to prevent potential misuse. The full version will be made available upon publication. Thank you for your understanding.}

\subsection{Experiment Setting}

\noindent\textbf{Datasets.}
We evaluate SAT and other methods on 5 widely used large-scale node classification graph benchmarks:
\begin{itemize}[noitemsep,leftmargin=*]
    \item OGB-Arxiv~\cite{hu2020open} is a graph dataset that represents the citation network between arXiv papers. Within OGB-Arxiv, each node corresponds to an individual arXiv paper, while every edge indicates a citation relationship between the papers. The task is to predict the areas of papers.
    \item Flickr~\cite{zeng2019graphsaint} is used for categorizing types of images. In this dataset, each node is an individual image, and whenever two images share common properties, such as location, an edge is established between them. 
    \item Reddit~\cite{zeng2019graphsaint} is a graph dataset from Reddit posts. The node label is the community that a post belongs to. Two posts are connected if the same user comments on both posts.
    \item OGB-Products~\cite{hu2020open} representing an Amazon product co-purchasing network. Nodes represent products while edges between two products indicate that the products are purchased together. The task is to predict the categories of products. 
    \item OGB-Papers100m~\cite{hu2020open} is a citation graph, and it is used for predicting the subject areas of papers.
\end{itemize}
The detailed information such as statistics of these datasets is summarized in Table~\ref{tab: dataset}.

\begin{figure*}[t!]
\centering
  \begin{subfigure}{0.24\textwidth}
    \includegraphics[width=1.6in]{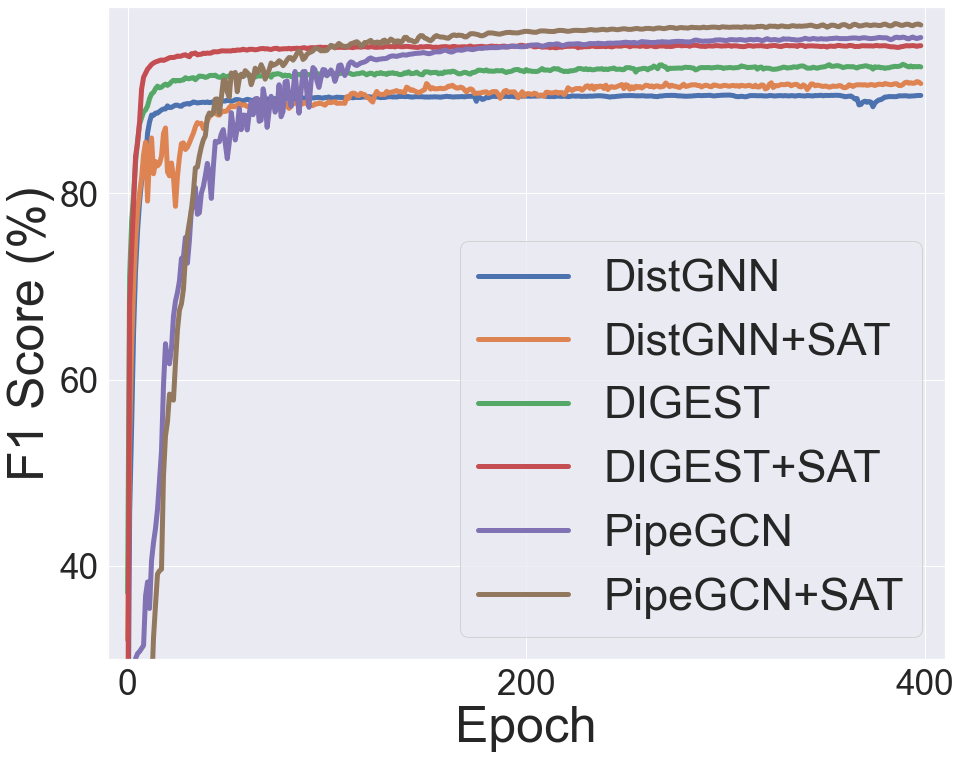}
    \caption{GCN on Reddit}
    \label{fig: gcn reddit test f1}
  \end{subfigure}
  \begin{subfigure}{0.24\textwidth}
    \includegraphics[width=1.6in]{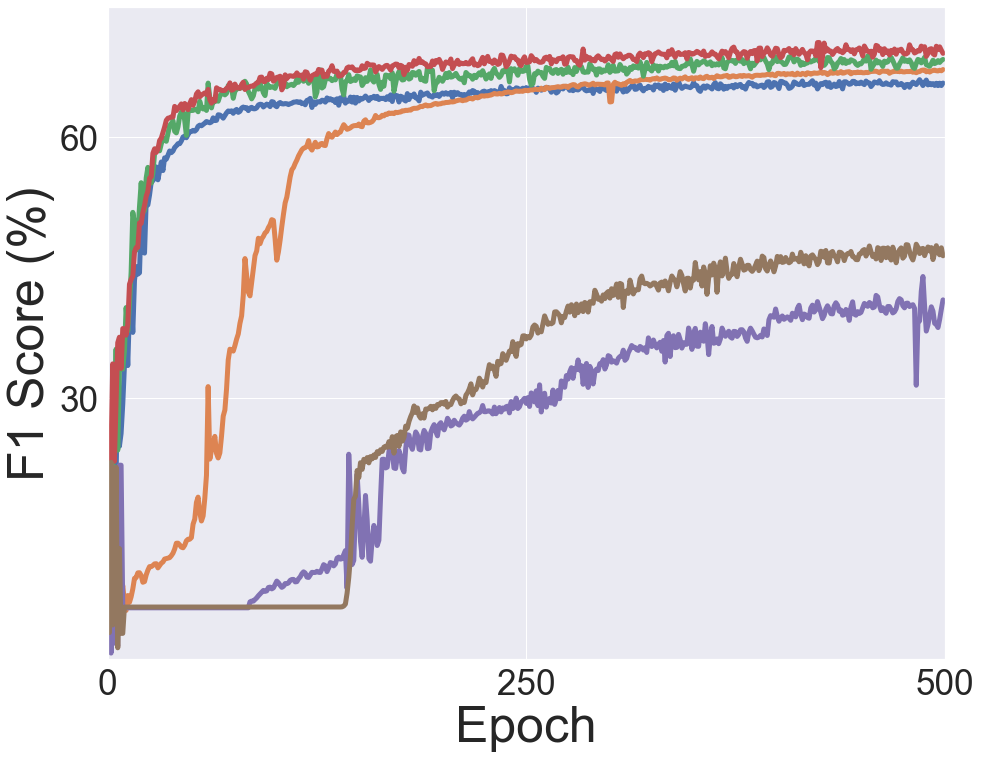}
    \caption{GCN on ogbn-arxiv}
    \label{fig: gcn arxiv test f1}
  \end{subfigure}
  \begin{subfigure}{0.24\textwidth}
    \includegraphics[width=1.6in]{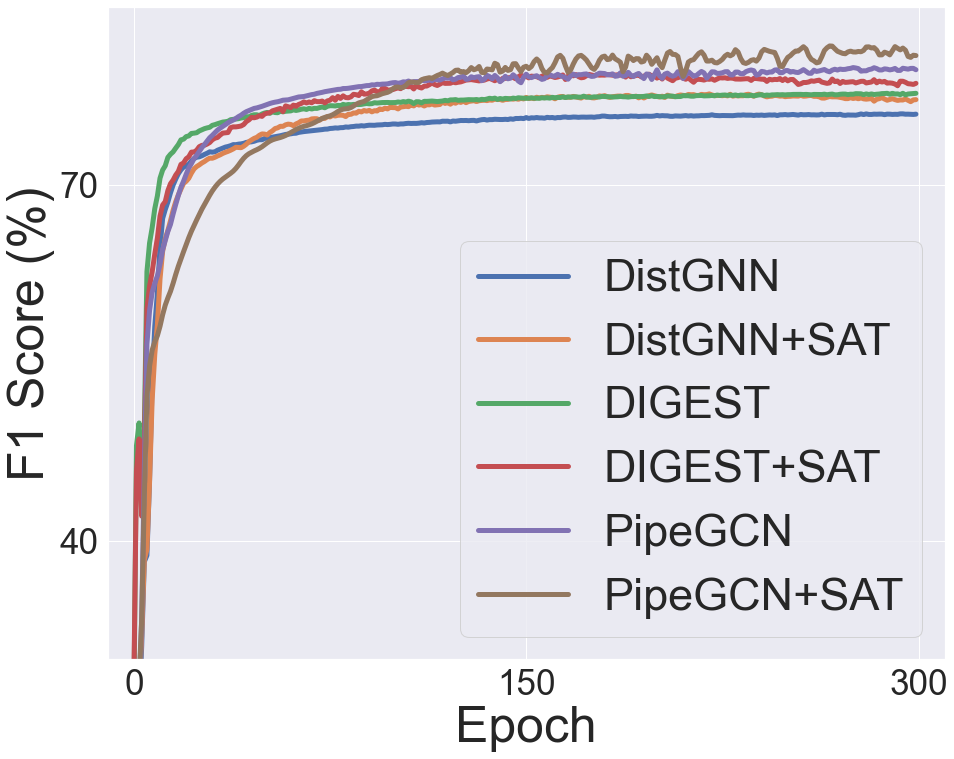}
    \caption{GCN on ogbn-products}
    \label{fig: gcn products test f1}
  \end{subfigure}
  \begin{subfigure}{0.24\textwidth}
    \includegraphics[width=1.6in]{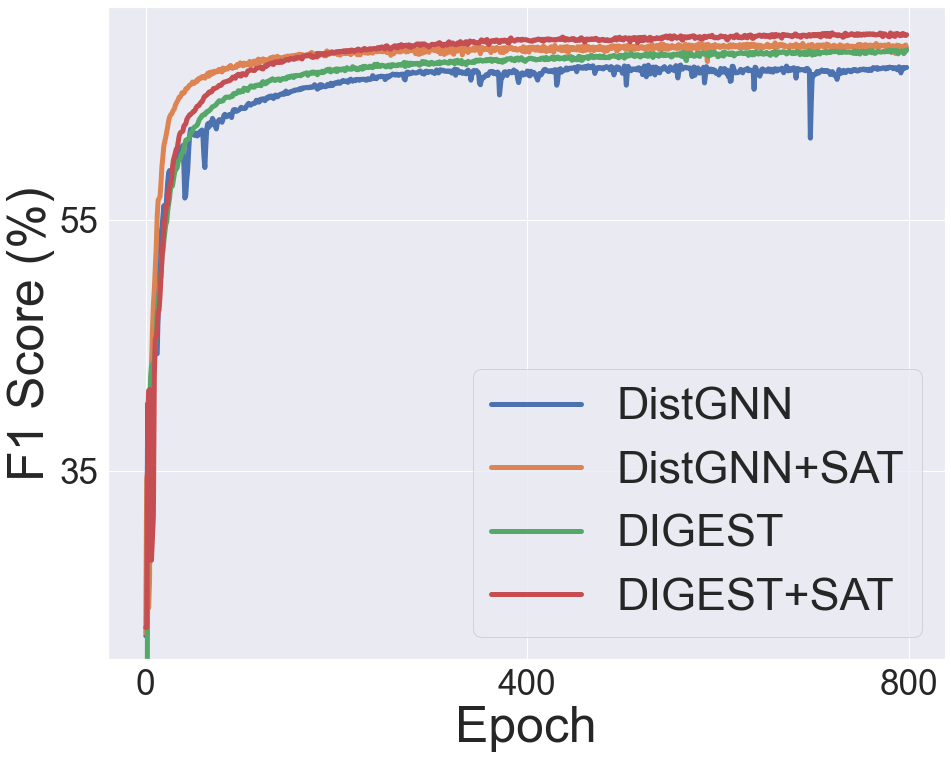}
    \caption{GAT on ogbn-arxiv}
    \label{fig: gat arxiv test f1}
  \end{subfigure}
  \caption{\textbf{Detailed Performance Trajectories of Distributed GNN Training Methods with and without Staleness Alleviation by SAT.} These learning curves chart the evolution of global testing F1 scores across training epochs, delineating the impact of staleness alleviation on model accuracy over time. Each subplot corresponds to a different combination of GNN backbones and datasets. The ‘+’ symbol indicates the augmentation of the respective method with our SAT framework. Notably, the SAT-enhanced versions consistently reach higher F1 scores more quickly and maintain a leading performance, demonstrating the effectiveness of SAT in enhancing learning efficiency and accuracy in distributed GNN training.}
  \label{fig: distributed f1 curves}
\vspace{-3mm}
\end{figure*}

\begin{figure*}[t!]
\centering
  \begin{subfigure}{0.24\textwidth}
    \includegraphics[width=1.8in]{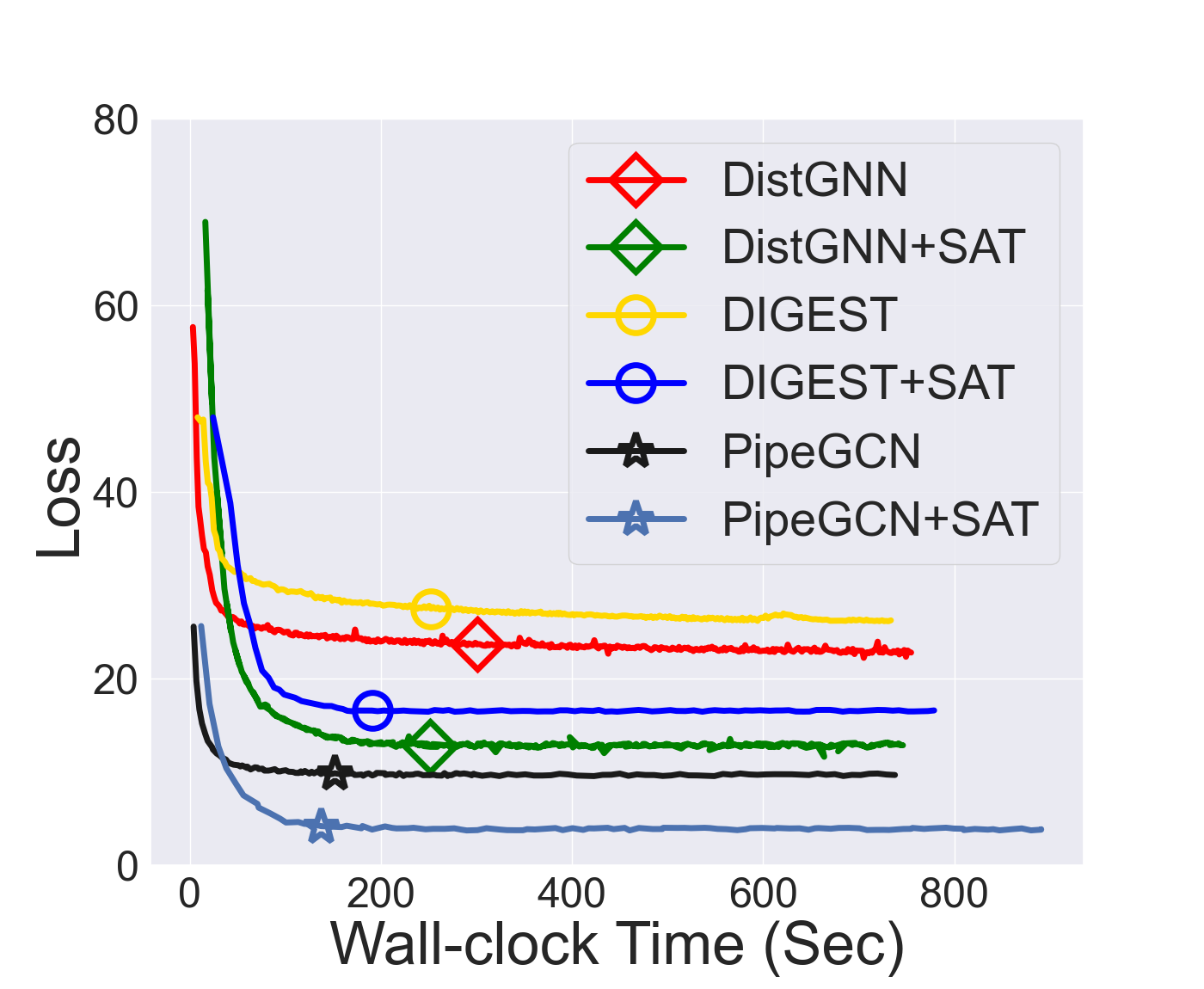}
    \caption{GCN on Reddit}
  \end{subfigure}
  \begin{subfigure}{0.24\textwidth}
    \includegraphics[width=1.8in]{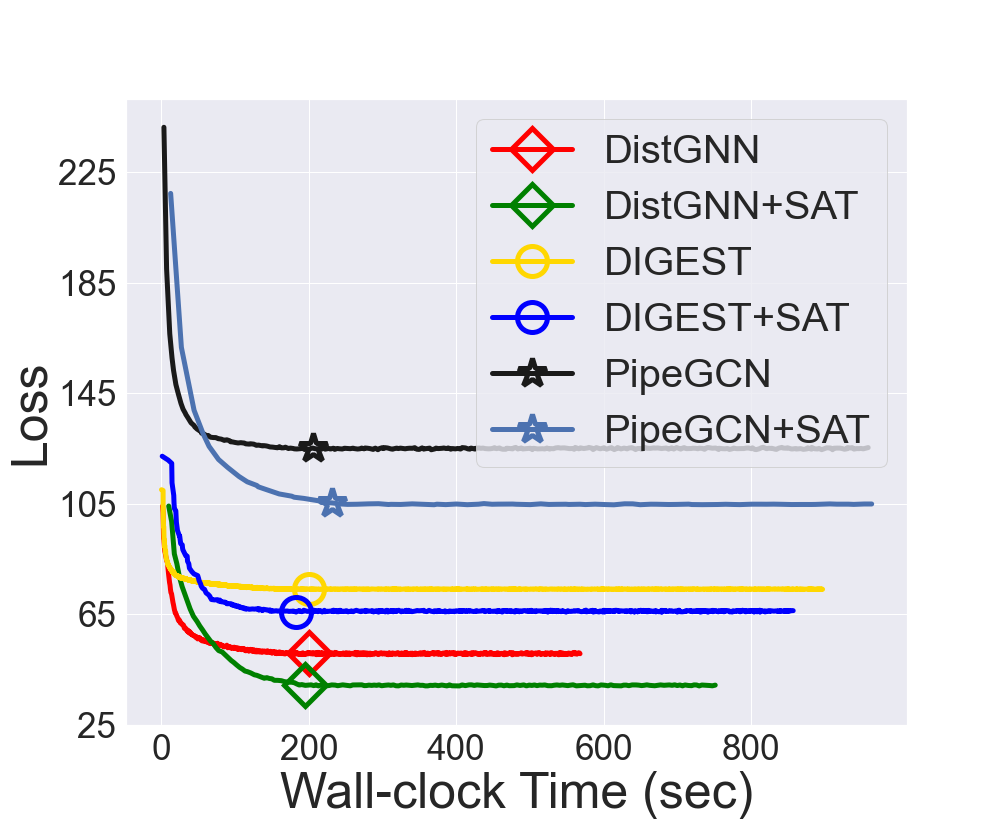}
    \caption{GCN on ogbn-arxiv}
  \end{subfigure}
  \begin{subfigure}{0.24\textwidth}
    \includegraphics[width=1.8in]{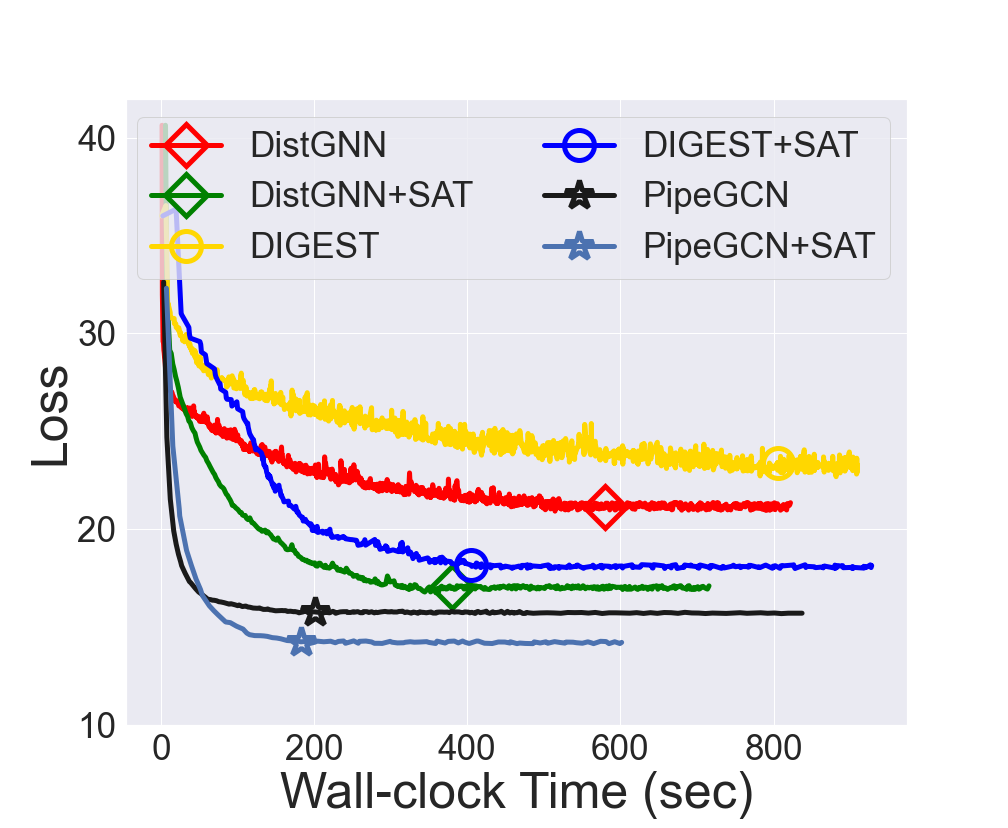}
    \caption{GCN on Flickr}
  \end{subfigure}
  \begin{subfigure}{0.24\textwidth}
    \includegraphics[width=1.8in]{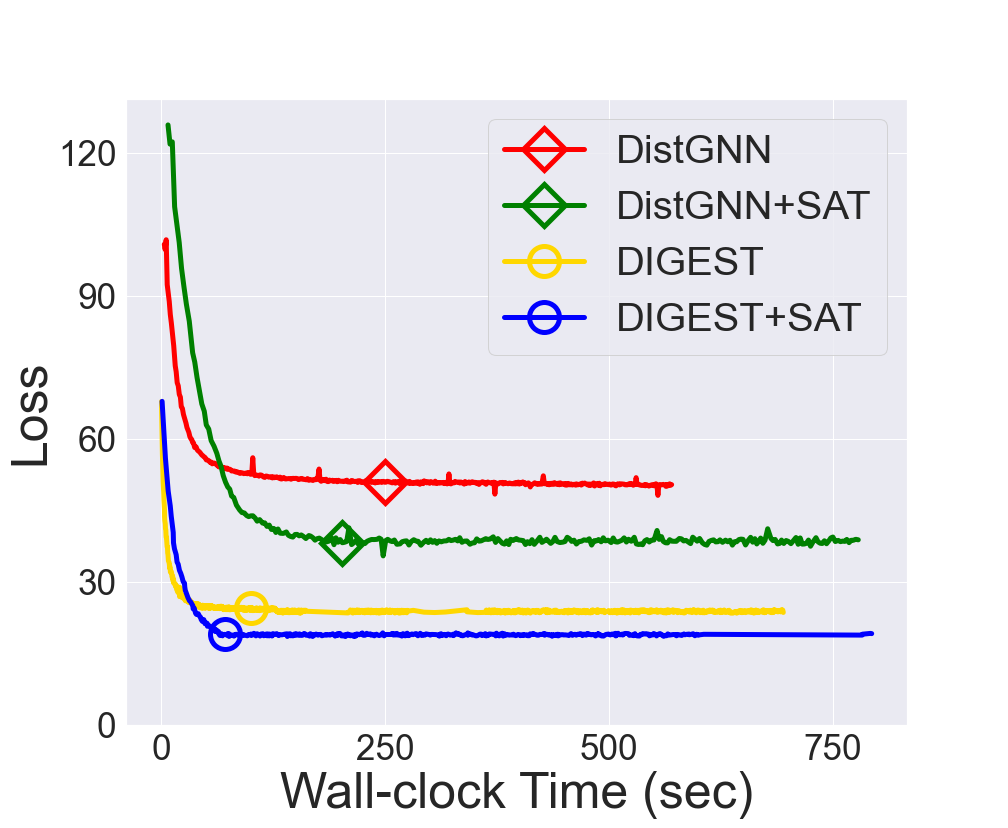}
    \caption{GAT on ogbn-arxiv}
  \end{subfigure}
  \vspace{-2mm}
  \caption{\textbf{Comparative Training Loss Evolution for Distributed GNNs with and without SAT.} The plots demonstrate the training loss (cross-entropy) against wall-clock time for DistGNN, DIGEST, and PipeGCN, both with (SAT) and without (vanilla) the application of Staleness-Aware Training. Convergence points are marked to highlight the improved convergence speed facilitated by SAT. In most cases, SAT enhances the training efficiency, indicated by the reduced number of epochs needed to reach convergence, thanks to the predictive correction of embedding staleness.}
  \label{fig: wallclock time curves}
\vspace{-3mm}
\end{figure*}

\noindent\textbf{Comparison Methods.} We compare our SAT with various distributed GNN training methods, including:
\begin{itemize}[noitemsep,leftmargin=*]
    \item LLCG~\cite{ramezani2021learn} proposed a framework that accelerates the sampling method under the memory budget.
    \item DistDGL~\cite{zheng2020distdgl} is the system that helps train large graphs with millions of nodes and billions of edges based on DGL.
    \item BNS-GCN~\cite{wan2022bns} proposed an efficient full-graph training of graph convolutional networks with partition-parallelism and random boundary node sampling.
    \item SANCUS~\cite{peng2022sancus} proposed a decentralized distributed GNN training framework with controlled embedding staleness.
    \item DistGNN~\cite{md2021distgnn} proposed a distributed full-graph training method based on s shared memory, a minimum vertex cut partitioning algorithm and stale embedding.
    \item DIGEST~\cite{chai2022distributed}: This is a framework that extends GNNAutoscale in a distributed synchronous manner.
    \item PipeGCN~\cite{wan2022pipegcn}: This is a framework that makes the training of large graphs more efficient through pipelined feature communication.
\end{itemize}

Note that our proposed SAT, although tailored for the distributed training setting, is general and can also be applied to reduce the staleness in traditional sampling-based GNN training approaches, including:
\begin{itemize}[noitemsep,leftmargin=*]
    \item GraphSAGE~\cite{hamilton2017inductive} is a scalable, sampling-based algorithm for learning on large graphs, enabling efficient generation of node embeddings for previously unseen data.
    \item VR-GCN~\cite{chen2018stochastic} is a framework that is trained stochastically and reduces the variance to a significant magnitude.  
    \item Cluster-GCN~\cite{chiang2019cluster} samples a block of nodes associated with a dense subgraph, identified by a graph clustering algorithm, and restricts the neighborhood search within this subgraph.
    \item GNNAutoscale (GAS)~\cite{fey2021gnnautoscale} is a scalable framework that dynamically adjusts the scale and complexity of neural networks to optimize performance and resource utilization.
\end{itemize}

\noindent\textbf{Experimental Setting Details.} For a fair comparison, we use the same optimizer (Adam), learning rate, and graph partition algorithm for all the frameworks. For parameters that are unique to e.g., PipeGCN, GNNAutoscale, DIGEST, VR-GCN, such as the number of neighbors sampled from each layer for each node and the number of layers, we keep those parameters consistent with the corresponding versions by combining with our SAT. Each of the ten frameworks has a set of parameters that are exclusively unique to that framework; for these exclusive parameters, we tune them to achieve the best performance. Please refer to the configuration files under \texttt{small\_benchmark/conf} for detailed configuration setups for all the models and datasets.

\begin{figure*}[t!]
\centering
  \begin{subfigure}{0.32\textwidth}
    \includegraphics[width=2.3in]{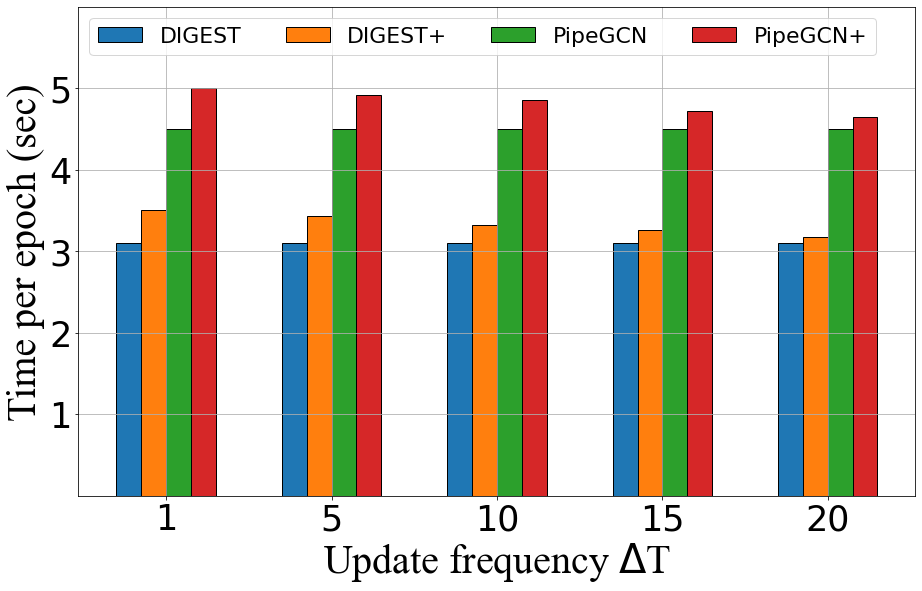}
    \caption{Training time analyses}
    \label{fig: computation overhead}
  \end{subfigure}
  \begin{subfigure}{0.33\textwidth}
    \includegraphics[width=2.2in]{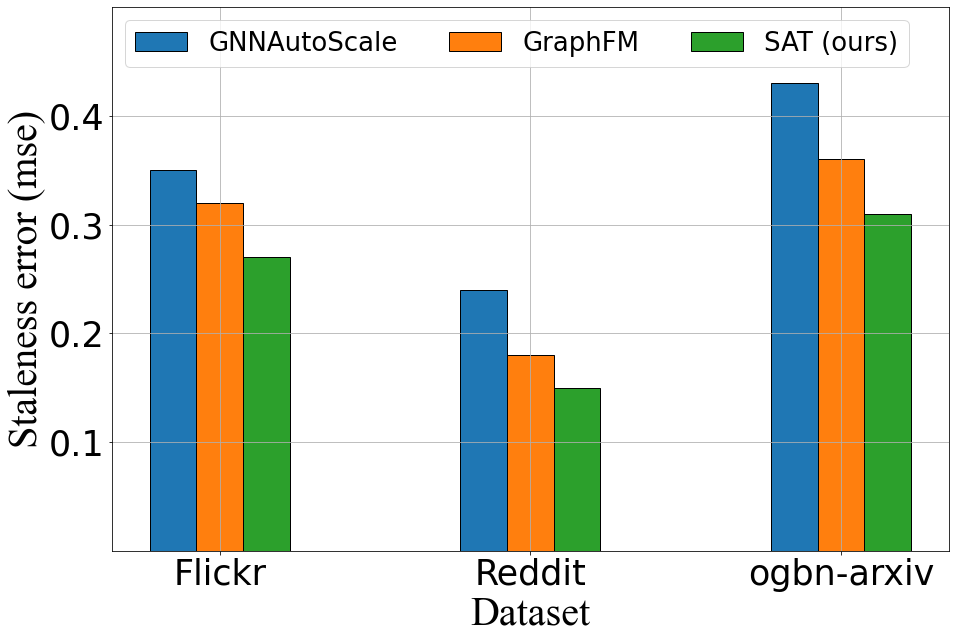}
    \caption{Effectiveness of staleness reduction}
    \label{fig: staleness reduction}
  \end{subfigure}
  \begin{subfigure}{0.31\textwidth}
    \includegraphics[width=2in]{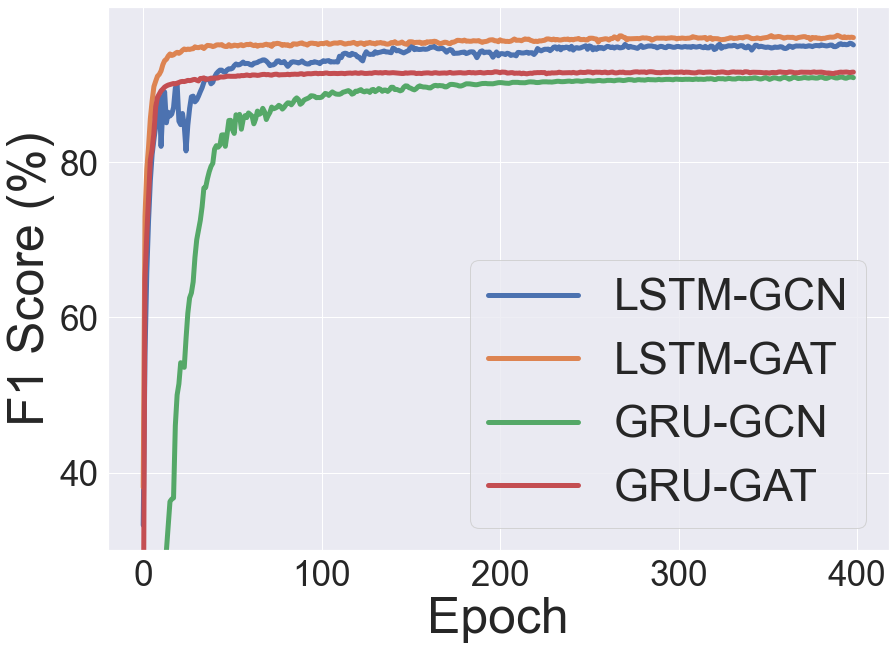}
    \caption{Embedding predictor configurations}
    \label{fig: predictor comparison}
  \end{subfigure}
  \caption{\textbf{Left.} The training time induced by our embedding predictor is marginal and can be decreased by increasing the frequency factor $\Delta T$ for updating the predictor. $+$ denotes that our SAT is applied to this baseline for reducing embedding staleness. \textbf{Middle.} SAT can help reduce the staleness by a large margin and is more effective than other methods. \textbf{Right.} Comparison of different configurations of the embedding predictor.}
  \label{fig: additional results}
\vspace{-3mm}
\end{figure*}

\subsection{Enhanced Performance by SAT}

In this section, we offer an in-depth evaluation of the performance enhancements attributed to our Staleness Alleviation Technique (SAT) within various distributed GNN training frameworks. Specifically, we evaluate SAT against 7 distributed GNN training methods, including DistDGL, LLCG, BNS-GCN, SANCUS, DistGNN, DIGEST, and PipeGCN. Since DistGNN, DIGEST, and PipeGCN utilize historical embeddings, we implement an upgraded version for each of them by combining them with our embedding predictor.

As shown in Table~\ref{tab: distributed training results table}, it's evident that the integration of SAT with DistGNN, DIGEST, and PipeGCN has led to marked improvements in test F1 scores across all datasets. Notably, for the Reddit dataset, the SAT-augmented versions exhibit an appreciable performance leap, with the most pronounced gain observed in PipeGCN, where SAT integration has resulted in an F1 score increase from $96.20$ to $97.02$. This underscores the capability of SAT to significantly boost the accuracy of predictions in highly interconnected social network graphs. Similarly, for the ogbn-arxiv dataset, we see a substantial augmentation in performance when SAT is applied, particularly with the GCN backbone, enhancing the score from $71.54$ to $72.13$.

LLCG performs worst particularly for the Reddit dataset because in the global server correction of LLCG, only a mini-batch is trained and it is not sufficient to correct the plain GCN. This is also the reason why the authors of LLCG report the performance of a complex model with mixing GCN layers and GraphSAGE layers~\cite{ramezani2021learn}. 
DistDGL achieves good performance on some datasets (e.g., OGB-products) with uniform node sampling strategy and real-time embedding exchanging. However, frequent communication also leads to slow performance increases for dataset Flickr and poor performance for all four datasets.

The learning curves depicted in Figure~\ref{fig: distributed f1 curves} further substantiate these findings. The curves demonstrate that SAT provides a consistent and robust performance uplift across epochs. For instance, in the case of GCN on Reddit (Figure~\ref{fig: gcn reddit test f1}), the learning curve of DistGNN+SAT outpaces its non-SAT counterpart early in the training process and maintains a higher F1 score throughout, signifying not only improved performance but also a potential reduction in convergence time. Moreover, the learning curve for GCN on ogbn-products (Figure~\ref{fig: gcn products test f1}) shows that the SAT-enhanced models attain a plateau of higher F1 scores faster than those without SAT, indicating an enhanced learning efficiency.

\begin{table}[t!]
\small
\centering
\caption{\textbf{GPU memory consumption (GB) } We compare PipeGCN and DIGEST with those combined with our proposed module by including all the information inside a GCN's receptive field in a single optimization step. } 
\begin{tabular}{ccccc}
\hline
   GCN Layers & Method & Flickr & ogbn-arxiv & Reddit  \\
\hline
2-layer  & PipeGCN & 0.23 & 0.26 & 0.31 \\
\rowcolor{gray!20} 2-layer  & \emph{w/ SAT (Ours)} & 0.25 & 0.28 & 0.34 \\
2-layer  & DIGEST & 0.18 & 0.22 & 0.27 \\
\rowcolor{gray!20} 2-layer  & \emph{w/ SAT (Ours)} & 0.20 & 0.25 & 0.30 \\
\hline
3-layer  & PipeGCN & 0.26 & 0.28 & 0.35 \\
\rowcolor{gray!20} 3-layer  & \emph{w/ SAT (Ours)} & 0.28 & 0.31 & 0.37 \\
3-layer  & DIGEST & 0.20 & 0.24 & 0.29 \\
\rowcolor{gray!20} 3-layer  & \emph{w/ SAT (Ours)} & 0.23 & 0.26 & 0.31 \\
\hline
4-layer  & PipeGCN & 0.29 & 0.32 & 0.37 \\
\rowcolor{gray!20} 4-layer  & \emph{w/ SAT (Ours)} & 0.31 & 0.34 & 0.39 \\
4-layer  & DIGEST & 0.22 & 0.27 & 0.32 \\
\rowcolor{gray!20} 4-layer  & \emph{w/ SAT (Ours)} & 0.25 & 0.30 & 0.34 \\
\hline
\end{tabular}
\label{tab: memory consumption}
\vspace{-3mm}
\end{table}

\subsection{SAT Reduces More Staleness for More Graph Partitions.}

The relationship between the number of partitions used in distributed GNN training and the performance of the models is a key factor to consider when evaluating the effectiveness of different methods. As indicated by the subscripts under each dataset's name in Table~\ref{tab: distributed training results table}, which represents the number of partitions used, there is a clear trend: given a fixed method, as the number of partitions increases, the performance typically decreases. This decline can be attributed to the greater number of boundary nodes that arise with more partitions, introducing more staleness and approximation error into the model's training process. The partitioning inevitably leads to incomplete local information during the training of each partition, which in turn affects the accuracy of the embeddings.

Furthermore, examining the performance gains achieved by our proposed Staleness-Aware Training (SAT) method, it is evident that SAT is particularly effective in scenarios with a higher number of partitions. As the table shows, SAT-enhanced methods can leverage the larger "room for improvement" in these high-partition scenarios to significantly reduce embedding staleness, which is more pronounced due to the increased boundary issues and approximation errors. This is reflected in the larger performance gains observed for SAT-augmented methods in the higher-partitioned datasets. For instance, in settings with a larger number of partitions, SAT's impact on performance is more substantial, which underscores the robustness and necessity of SAT in distributed GNN frameworks where managing staleness is crucial for maintaining high-quality embeddings and, consequently, model performance.

\begin{table*}[t!]
\small
\centering
\caption{Performance comparison (test F1 score) of traditional sampling-based methods. The mean and standard deviation are calculated based on multiple runs with varying seeds. - denotes out-of-memory.  SAT can still boost the performance of sampling-based GNN training methods by reducing the staleness.}
    \begin{tabular}{lcccccccc}
    \toprule
    \multirow{2}{*}{\textbf{Model}} & \multicolumn{4}{c}{\textbf{GCN}} & \multicolumn{4}{c}{\textbf{GCN-II}}\\ \cmidrule(lr){2-5}\cmidrule(lr){6-9}
& \textbf{Flickr} & \textbf{Reddit} & \textbf{ogb-arxiv} & \textbf{ogb-products} & \textbf{Flickr} & \textbf{Reddit} & \textbf{ogb-arxiv}  & \textbf{ogb-products} \\
\midrule
GraphSAGE & 49.23\scriptsize$\pm$0.21 & 94.87\scriptsize$\pm$0.17 & 71.03\scriptsize$\pm$0.09 & 76.43\scriptsize$\pm$0.21  &  49.29\scriptsize$\pm$0.19 &  95.03\scriptsize$\pm$0.15 &  71.12\scriptsize$\pm$0.14 & 76.49\scriptsize$\pm$0.25 \\
Cluster-GCN & 47.95\scriptsize$\pm$0.31 & 94.68\scriptsize$\pm$0.25 & 70.48\scriptsize$\pm$0.15 & 76.53\scriptsize$\pm$0.35  &  48.95\scriptsize$\pm$0.34 &  94.76\scriptsize$\pm$0.22 &  71.48\scriptsize$\pm$0.14 & 76.62\scriptsize$\pm$0.26 \\
VR-GCN & 48.72\scriptsize$\pm$0.33 & 95.23\scriptsize$\pm$0.24 & - & -  &  49.72\scriptsize$\pm$0.26 & 95.76\scriptsize$\pm$0.21 & - & - \\
\;\;\emph{w/ SAT (Ours)}    & 49.53\scriptsize$\pm$0.26 & 95.93\scriptsize$\pm$0.31 & -  &  -  & 50.53\scriptsize$\pm$0.29 &  96.32\scriptsize$\pm$0.28 &  - &  - \\
GNNAutoScale    & 53.52\scriptsize$\pm$0.76 & 95.02\scriptsize$\pm$0.41   &   71.18\scriptsize$\pm$0.27   &  76.65\scriptsize$\pm$0.23   & 54.03\scriptsize$\pm$0.45 &  96.28\scriptsize$\pm$0.47   &   73.03\scriptsize$\pm$0.25   &  77.34\scriptsize$\pm$0.21 \\
\; \emph{w/ SAT (Ours)}    & 54.03\scriptsize$\pm$0.57 & 96.03\scriptsize$\pm$0.27  & 71.51\scriptsize$\pm$0.22  & 76.83\scriptsize$\pm$0.29  & 55.21\scriptsize$\pm$0.47 & 97.03\scriptsize$\pm$0.19  & 73.62\scriptsize$\pm$0.25  & 78.02\scriptsize$\pm$0.27 \\
    \bottomrule
    \end{tabular}%
    \label{tab: comparative results non-distribute}
\end{table*}

\begin{table}[th!]
\small
\centering
\caption{Training time on ogbn-paper100M (per 100 epoch).} 
\begin{tabular}{cccc}
\hline
Model &  Time (s) & Model & Time (s)  \\
 \hline
PipeGCN  & 710 & DIGEST  & 563   \\
PipeGCN+SAT  & 769 & DIGEST+SAT  & 602   \\
\hline
Ratio  & 8.3\% & Ratio  & 6.9\%   \\
\hline
\end{tabular}
\label{tab: very large graph}
\vspace{-2mm}
\end{table}

\subsection{Convergence Speed Analyses}

The results presented in Figure~\ref{fig: wallclock time curves} provide a clear indication of the benefits introduced by the Staleness-Aware Training (SAT) approach in distributed GNN training. By integrating an embedding predictor, SAT compensates for the latency in the synchronization of distributed embeddings, effectively reducing the staleness that commonly plagues distributed training paradigms. The cross-entropy training loss curves, when viewed against wall-clock time, show that SAT not only minimizes the number of epochs required for convergence but also enhances the training loss descent trajectory in most instances.

While SAT introduces a slight increase in the computation time per epoch, this overhead is offset by a substantial decrease in the total number of epochs. The trade-off culminates in a net gain in training efficiency, as evidenced by the earlier convergence points for SAT-augmented methods. This illustrates the underlying effectiveness of SAT's predictive mechanism in maintaining the currency and relevance of embeddings, thus accelerating the training process without compromising the model's performance. Future research may explore ways to optimize the embedding predictor to further improve the trade-off between computational overhead and staleness mitigation, to advance the state-of-the-art in distributed GNN training.

\subsection{Training Time and Memory Overhead}

In-depth scrutiny of our embedding predictor reveals its influence on training time. As delineated in Figure~\ref{fig: computation overhead}, the addition of the embedding predictor contributes only a marginal increase in the training time per epoch. Significantly, this overhead demonstrates an inverse relationship with the update frequency of the embedding predictor, diminishing to a negligible level when the frequency factor, $\Delta T$, reaches 20. Through empirical analysis, we determined an optimal balance between performance enhancement and computational efficiency with $\Delta T$ set within the range of 5 to 15.

Table~\ref{tab: memory consumption} elucidates the GPU memory footprint incurred by integrating our embedding predictor with DIGEST and PipeGCN. The data exhibit a consistent pattern: the augmented frameworks incur less than a 10\% increase in memory consumption. This modest increment can be attributed to our efficient historical embedding compression technique, which conserves memory without compromising the fidelity of the embeddings. The nuances of our compression algorithm and the associated memory savings are expounded upon in the appendix, where we provide a comprehensive account of the methodology and its efficacy in a memory-constrained training environment.

Our results paint a clear picture: the SAT-equipped models, DIGEST+ and PipeGCN+, exhibit an admirable synergy of performance and efficiency. The slight memory overhead introduced by SAT is a small price to pay for the substantial gains in accuracy and training speed, making it an attractive proposition for those seeking to optimize distributed GNN training

\subsection{SAT Improves Performance for Sampling-based Methods}

We further analyze how our proposed framework SAT performs under the sampling and stochastic-training-based methods. We pick multiple widely used state-of-the-art methods, i.e., GraphSAGE, VR-GCN, Cluster-GCN, and GNNAutoScale. Since VR-GCN and GNNAutoScale utilize historical embedding, we implement an upgraded version of them by combining them with SAT. As can be seen in Table~\ref{tab: comparative results non-distribute}, both VR-GCN+ and GNNAutoScale+ outperform their counterparts in all cases, confirming the practical effectiveness of our method. 

The results, as detailed in Table IV, reveal a consistent trend: both VR-GCN+ and GNNAutoScale+ exhibit superior performance over their original versions across various datasets. This performance elevation is not only consistent but also significant, as indicated by the test F1 scores. For instance, VR-GCN+ demonstrates an improvement margin that ranges from slight in the case of the Reddit dataset to more pronounced in the ogb-arxiv dataset, as compared to the base VR-GCN.


\subsection{Measuring the Staleness of Embeddings}

The temporal relevance of historical embeddings is a critical factor in the efficacy of GNNs, particularly in a distributed setting. Figure~\ref{fig: staleness reduction} presents a comparative visualization of embedding staleness across three distinct methods. Among these, GNNAutoScale exhibits the highest level of staleness, while GraphFM and our SAT deploy strategies specifically designed to mitigate this issue.

GraphFM, a rule-based method cited from Yu et al.~\cite{yu2022graphfm}, offers staleness reduction but is constrained to applicability with GCN architectures alone. In contrast, SAT showcases its robustness with consistently the lowest staleness error across all datasets considered. This empirical evidence illustrates the superior effectiveness of SAT in maintaining the temporal accuracy of embeddings over the rule-based GraphFM approach.

The stark contrast in staleness error between SAT and the other methods underscores the advanced capability of SAT to ensure that embeddings remain current, thereby significantly enhancing the predictive performance of the models. The staleness metric, as captured in Figure~\ref{fig: staleness reduction}, serves as a testament to the sophisticated mechanism of SAT that proactively refreshes embeddings to align with the most recent graph structure and feature information. This mechanism not only underscores the superiority of SAT but also emphasizes its role as a pivotal technique in reducing the latency of information flow in distributed GNN training

\subsection{Scalability on Very-Large Graphs}

Here, we evaluate our proposed SAT framework on ogbn-paper100M, which contains more than 111 million nodes and requires multiple GPU servers for the training. We follow the same setting used by~\cite{wan2022pipegcn}, which consists of 32 GPUs. As can be seen in Table~\ref{tab: very large graph}, even on the very large graph, our proposed embedding predictor introduces roughly 7\% additional training time, which is due to our decaying fine-tuning frequency of the embedding predictor. Hence, our SAT has great scalability while boosting performance by a great margin with reduced staleness, let alone the convergence speedup.

\subsection{Comparison of Different Predictor Configurations}

Figure~\ref{fig: predictor comparison} provides a detailed comparative analysis of the performance implications of different embedding predictor architectures. The figure demonstrates that architectures based on Long Short-Term Memory (LSTM) units surpass those employing Gated Recurrent Units (GRU) in terms of F1 score. This disparity may be attributed to the inherently more complex and capable architecture of LSTMs, which can better capture and utilize the temporal dynamics of graph data.

Between the LSTM-augmented GAT and GCN models, the performance margins are narrow, indicating a competitive edge for both. However, when considering computational efficiency, LSTM-GCN emerges as the more prudent choice, striking a favorable balance between performance and resource utilization. In practical applications, LSTM-GCN's efficiency makes it a preferred model, especially when dealing with large-scale graphs or when computational resources are at a premium.


\section{Conclusion}
\label{sec:conclusion}

Distributed GNN training with historical embeddings is a natural compromise of partition- and propagation-based methods and could enjoy the best of both worlds. However, the embedding staleness could potentially harm the model performance and convergence. In this paper, we present SAT (Staleness-Alleviated Training), a novel and scalable distributed GNN training framework that reduces the embedding staleness in a data-driven manner. We formulate the embedding prediction task as an online prediction problem over the dynamic embeddings which form a temporal graph. We provide theoretical analyses on the convergence of SAT. Extensive experiments over various comparison methods on multiple real-world graph benchmarks with different GNN operators demonstrate that our proposed SAT can greatly boost the performance of existing historical-embedding-based methods and also achieve faster convergence speed, while the additional cost is marginal.

\section{Theoretical Proof}
\label{sec: theory proof}

In this section, we provide the formal proof for our main theory presented in the paper. Specifically, we prove the convergence of SAT. First, we introduce some notions, definitions, and necessary assumptions.

\textbf{Preliminaries.} We consider GCN in our proof without loss of generality. We denote the input graph as $\mathcal{G}=(\mathcal{V},\mathcal{E})$, $L$-layer GNN as $f$, feature matrix as $X$, weight matrix as $W$. The forward propagation of one layer of  GCN is
\begin{equation}
    Z^{(\ell+1)} = PH^{(\ell)}W^{(\ell)},\quad H^{(\ell+1)} = \sigma(Z^{(\ell)})
\end{equation}
where $\ell$ is the layer index, $\sigma$ is the activation function, and $P$ is the propagation matrix following the definition of GCN~\cite {kipf2016semi}. Notice $H^{(0)}=X$. We can further define the $(\ell+1)$-th layer of  GCN as:
\begin{equation}
    f^{(\ell+1)}(H^{(\ell)},W^{(\ell)}) \coloneqq \sigma(PH^{(\ell)}W^{(\ell)})
\end{equation}

The backward propagation of GCN can be expressed as follow:
\begin{equation}
\begin{split}
    G^{(\ell)}_{H} &= \nabla_{H} f^{(\ell+1)}(H^{(\ell)},W^{(\ell)},G^{(\ell+1)}_{H}) \\
    &\coloneqq P^{\intercal}D^{(\ell+1)}(W^{(\ell+1)})^{\intercal}
\end{split}
\end{equation}
\begin{equation}
    \begin{split}
        G^{(\ell+1)}_{W} &= \nabla_{W} f^{(\ell+1)}(H^{(\ell+1)},W^{(\ell)},G^{(\ell+1)}_{H}) \\
        &\coloneqq (PH^{(\ell)})^\mathcal{\intercal}D^{(\ell+1)}
    \end{split}
\end{equation}
where
\begin{equation}
    D^{(\ell+1)} = G^{(\ell)}_{H} \circ \sigma^{\prime} (PH^{(\ell)}W^{(\ell+1)})
\end{equation}
and $\circ$ represents the Hadamard product.

Under a distributed training setting, for each subgraph $\mathcal{G}_m = (\mathcal{V}_m, \mathcal{E}_m)$, $m=1.2,\cdots,M$, the propagation matrix can be decomposed into two independent matrices, i.e. $P = P_{m,in} + P_{m,out}$, where $ P_{m,in}$ denotes the propagation matrix for nodes inside the subgraph $\mathcal{G}_m$ while $ P_{m,out}$ denotes that for neighbor nodes outside $\mathcal{G}_m$. If it will not cause confusion, we will use $P_{in}$ and $P_{out}$ in our future proof for simpler notation. 

For SAT, the forward propagation of a single layer of GCN can be expressed as
\begin{equation}
\begin{split}
    \tilde{Z}_{m}^{(t,\ell+1)} &= P_{in}\tilde{H}_{m}^{(t,\ell)}\tilde{W}_{m}^{(t,\ell)} + P_{out}\tilde{H}_{m}^{(t-1,\ell)}\tilde{W}_{m}^{(t,\ell)} \\
    \tilde{H}_{m}^{(t,\ell+1)} &= \sigma(\tilde{Z}_{m}^{(t,\ell)})
\end{split}
\end{equation}
where we use $\tilde{H}$ to differentiate with the counterpart without staleness, i.e., $H$ (same for other variables). $t$ is the training iteration index. Similarly, we can define each layer as a single function
\begin{equation}
\begin{split}
    &\tilde{f}_{m}^{(t,\ell+1)}(\tilde{H}_{m}^{(t,\ell)},\tilde{W}_{m}^{(t,\ell)}) \\
    &\coloneqq \sigma(P_{in}\tilde{H}_{m}^{(t,\ell)}\tilde{W}_{m}^{(t,\ell)} + P_{out}\tilde{H}_{m}^{(t-1,\ell)}\tilde{W}_{m}^{(t,\ell)})
\end{split}
\end{equation}

Note that $\tilde{H}_{m}^{(t-1,\ell-1)}$ is not part of the input since it is the stale results from the previous iteration, i.e., it can be regarded as a constant in the current iteration.

Now we can give the definition of back-propagation in SAT:
\begin{equation}
\begin{split}
    \tilde{G}^{(t,\ell)}_{H,m} &= \nabla_{H} \tilde{f}_{m}^{(t,\ell+1)}(\tilde{H}_{m}^{(\ell)},\tilde{W}^{(\ell)},\tilde{G}^{(\ell+1)}_{H,m}) \\
    &\coloneqq  P_{in}^{\intercal}\tilde{D}_{m}^{(t,\ell+1)}(\tilde{W}_{m}^{(t,\ell+1)})^{\intercal} \\
    &+ P_{out}^{\intercal}\tilde{D}_{m}^{(t-1,\ell+1)}(\tilde{W}_{m}^{(t,\ell+1)})^{\intercal}
\end{split}
\end{equation}
\begin{equation}
\begin{split}
    \tilde{G}^{(t,\ell+1)}_{W,m} &= \nabla_{W} \tilde{f}_{m}^{(t,\ell+1)}(\tilde{H}_{m}^{(t,\ell+1)},\tilde{W}_{m}^{(t,\ell)},\tilde{G}^{(t,\ell+1)}_{H,m}) \\
    &\coloneqq (P_{in}\tilde{H}_{m}^{(t,\ell)} + P_{out}\tilde{H}_{m}^{(t-1,\ell-1)})^{\intercal}\tilde{D}_{m}^{(t,\ell+1)}
\end{split}
\end{equation}
where
\begin{equation}
\begin{split}
    \tilde{D}_{m}^{(t,\ell+1)} = & G^{(\ell)}_{H,m} \circ \sigma^{\prime} (P_{in}\tilde{H}_{m}^{(t,\ell)}\tilde{W}_{m}^{(t,\ell)} \\
    &+ P_{out}\tilde{H}_{m}^{(t-1,\ell-1)}\tilde{W}_{m}^{(t,\ell)})
\end{split}
\end{equation}

In our proof, we use $\mathcal{L}(W^{(t)})$ to denote the global loss with GCN parameter $W$ after $t$ iterations, and use $\mathcal{\tilde{L}}_{m}(W_{m}^{(t)})$ to denotes the local loss for the $m$-th subgraph with model parameter $W_{m}^{(t)}$ after $t$ iterations computed by SAT.

\textbf{Assumptions.} Here we introduce some assumptions about the GCN model and the original input graph. These assumptions are standard ones that are also used in~\cite{chen2018stochastic,cong2021importance,wan2022pipegcn}.

\begin{assumption}
The loss function $Loss(\cdot,\cdot)$ is $C_{Loss}$-Lipchitz continuous and $L_{Loss}$-Lipschitz smooth with respect to the last layer's node representation, i.e.,
\begin{equation}
\begin{split}
   & \vert Loss(\mathbf{h}^{(L)}_v,\mathbf{y}_v) - Loss(\mathbf{h}^{(L)}_w,\mathbf{y}_v) \vert \\
   &\leq C_{Loss}\Vert \mathbf{h}^{(L)}_v - \mathbf{h}^{(L)}_w \Vert_2
\end{split}
\end{equation}
and
\begin{equation}
\begin{split}
    &\Vert \nabla Loss(\mathbf{h}^{(L)}_v,\mathbf{y}_v) - \nabla Loss(\mathbf{h}^{(L)}_w,\mathbf{y}_v) \Vert_2 \\&
    \leq L_{Loss}\Vert \mathbf{h}^{(L)}_v - \mathbf{h}^{(L)}_w \Vert_2
\end{split}
\end{equation}
\label{ass:lipschitz of loss}
\end{assumption}

\begin{assumption}
The activation function $\sigma(\cdot)$ is $C_{\sigma}$-Lipchitz continuous and $L_{\sigma}$-Lipschitz smooth, i.e.
\begin{equation}
    \Vert  \sigma(Z^{(\ell)}_1) - \sigma(Z^{(\ell)}_2) \Vert_2 \leq C_{\sigma}\Vert (Z^{(\ell)}_1 - Z^{(\ell)}_2 \Vert_2 
\end{equation}
and
\begin{equation}
   \Vert  \sigma^{\prime}(Z^{(\ell)}_1) - \sigma^{\prime}(Z^{(\ell)}_2) \Vert_2 \leq L_{\sigma}\Vert (Z^{(\ell)}_1 - Z^{(\ell)}_2 \Vert_2
\end{equation}
\label{ass:lipschitz of activation}
\end{assumption}

\begin{assumption}
$\forall\;\ell$ that $\ell=1,2,\cdots,L$, we have
\begin{equation}
    \Vert W^{(\ell)}\Vert_{F} \leq K_{W},\; \Vert P^{(\ell)}\Vert_{P} \leq K_{W},\; \Vert X^{(\ell)}\Vert_{F} \leq K_{X}.
\end{equation}
\end{assumption}

\begin{assumption}
Let $\hat{H}_{m}^{(t,\ell+1)}$ be the historical embedding before being corrected by the embedding predictor. The staleness satisfies the non-increasing property: 
\begin{equation}
       \Vert \tilde{H}_{m}^{(t,\ell+1)} - {H}_{m}^{(t,\ell+1)} \Vert \leq \Vert \hat{H}_{m}^{(t,\ell+1)} - {H}_{m}^{(t,\ell+1)} \Vert
\end{equation}
\end{assumption}

Now we can introduce the proof of our Theorem 4.1. We consider a GCN with $L$ layers that is $L_f$-Lipschitz smooth, i.e., $\Vert\nabla\mathcal{L}(W_1) - \nabla\mathcal{L}(W_2)\Vert_2 \leq L_f \Vert W_1 - W_2 \Vert_2$.

\begin{theorem}[Formal version of Theorem 4.1]
There exists a constant $E$ such that for any arbitrarily small constant $\epsilon > 0$, we can choose a learning rate $\eta=\frac{\sqrt{M\epsilon}}{E}$ and number of training iterations $T=(\mathcal{L}({W}^{(1)})-\mathcal{L}({W}^{*}))\frac{E}{\sqrt{M}}\epsilon^{-\frac{3}{2}}$, such that
\begin{equation}
\vspace{-2mm}
    \frac{1}{T}\sum_{t=1}^T \Vert \nabla \mathcal{L}({W}^{(t)}) \Vert^{2} \leq \mathcal{O}(\frac{1}{T^{\frac{2}{3}}M^{\frac{1}{3}}})
\end{equation}
where ${W}^{(t)}$and ${W}^{*}$ denotes the parameters at iteration $t$ and the optimal one, respectively.
\label{thm:convergence_copy}
\end{theorem}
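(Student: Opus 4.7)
The plan is to follow the standard non-convex convergence analysis for biased-gradient SGD, treating the embedding-predictor correction as producing a biased but bounded estimate of the true full-graph gradient. First I would invoke $L_f$-smoothness of $\mathcal{L}$ to obtain the descent inequality
\begin{equation}
\mathcal{L}(W^{(t+1)}) \leq \mathcal{L}(W^{(t)}) + \langle \nabla \mathcal{L}(W^{(t)}), W^{(t+1)} - W^{(t)} \rangle + \tfrac{L_f}{2}\|W^{(t+1)} - W^{(t)}\|_2^2,
\end{equation}
and substitute the SAT update $W^{(t+1)} - W^{(t)} = -\eta\, \tilde{g}^{(t)}$, where $\tilde{g}^{(t)} = \frac{1}{M}\sum_m \tilde{G}^{(t,L)}_{W,m}$ is the aggregated distributed gradient computed using the predictor's embeddings in place of the true out-of-subgraph embeddings.

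The core technical step is to decompose $\tilde{g}^{(t)} = \nabla\mathcal{L}(W^{(t)}) + e^{(t)}$ and bound the bias $e^{(t)}$ by the embedding staleness. Using the forward and backward recursions for $\tilde{f}^{(t,\ell+1)}_m$, $\tilde{G}^{(t,\ell)}_{H,m}$, $\tilde{G}^{(t,\ell+1)}_{W,m}$, and $\tilde{D}^{(t,\ell+1)}_m$ stated in the preliminaries, I would unroll layer by layer and apply Assumptions 1--3 (Lipschitz continuity/smoothness of $Loss$ and $\sigma$, boundedness of $W$, $P$, $X$) to obtain a recursive bound of the form
\begin{equation}
\|\tilde{H}^{(t,\ell+1)}_m - H^{(t,\ell+1)}_m\|_F \leq C_\sigma K_W K_P \|\tilde{H}^{(t,\ell)}_m - H^{(t,\ell)}_m\|_F + C_\sigma K_W K_P \|\tilde{H}^{(t-1,\ell)}_m - H^{(t,\ell)}_m\|_F,
\end{equation}
which telescopes into a constant (depending on $L$, $K_W$, $K_P$, $C_\sigma$) times the worst-case historical staleness. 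Assumption 4 guarantees that after the predictor correction this staleness is no larger than the pre-correction historical staleness, which in turn is bounded by the parameter drift $\sum_{s=t-\tau}^{t-1} \eta \|\tilde{g}^{(s)}\|_2$ over the sliding window $\tau$. Combining all three ingredients yields $\|e^{(t)}\|_2 \leq C_e\, \eta \max_{t-\tau \leq s \leq t-1} \|\tilde{g}^{(s)}\|_2$ for a constant $C_e$ that absorbs all the Lipschitz and boundedness quantities.

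Plugging this bias bound back into the descent inequality and telescoping from $t=1$ to $T$, I would obtain
\begin{equation}
\frac{1}{T}\sum_{t=1}^T \|\nabla\mathcal{L}(W^{(t)})\|_2^2 \leq \frac{\mathcal{L}(W^{(1)}) - \mathcal{L}(W^{*})}{T\eta} + C_1 \eta + \frac{C_2 L_f \eta}{M},
\end{equation}
after absorbing the cross terms using Young's inequality and using the $1/M$ variance-reduction effect of averaging $M$ parallel workers in $\tilde{g}^{(t)}$. Choosing $\eta = \sqrt{M\epsilon}/E$ with $E$ a sufficiently large constant absorbing $L_f$, $C_e$, $C_1$, $C_2$, $L$, and $\tau$, and $T$ as prescribed in the statement, balances the $1/(T\eta)$ descent term against the $\eta$ bias term and yields the claimed $\mathcal{O}\bigl(T^{-2/3} M^{-1/3}\bigr)$ rate.

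The main obstacle I expect is the recursive layerwise bound on the propagation of staleness into the gradient $\tilde{G}^{(t,\ell+1)}_{W,m}$. Because the backward formulas mix current-epoch in-subgraph terms ($P_{in}\tilde{H}^{(t,\ell)}_m$) with previous-epoch out-of-subgraph terms ($P_{out}\tilde{H}^{(t-1,\ell)}_m$) inside both $\tilde{H}$ and $\tilde{D}$, tracking how an $O(\eta)$ perturbation at layer $0$ blows up through $L$ layers requires careful use of Lipschitz smoothness of both $\sigma$ and $\sigma'$ together with $\Vert W\Vert_F, \Vert P\Vert \leq K_W, K_P$, and yields a constant growing like $(C_\sigma K_W K_P)^L$. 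A secondary subtlety is the decoupling between the GNN update (every iteration) and the predictor update (every $\Delta T$ iterations, Line 17 of Algorithm~\ref{alg: sat}); one must argue that between predictor refreshes the reused predicted embeddings still satisfy Assumption 4, so that the effective drift picks up at most an extra factor of $\Delta T$ that can be folded into the constant $E$.
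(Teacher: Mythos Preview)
Your high-level skeleton---descent lemma, decompose the SAT gradient as $\nabla\mathcal{L}(W^{(t)}) + e^{(t)}$, bound the bias via layerwise recursion under Assumptions~1--4, telescope, then tune $\eta$ and $T$---is exactly the route the paper takes. The only structural difference is that the paper does not carry out the layerwise staleness-to-gradient-error recursion itself; it simply invokes Corollary~A.10 of \cite{wan2022pipegcn} as a black box to conclude that the averaged bias satisfies $\bigl\|\tfrac{1}{M}\sum_m \delta_m^{(t)}\bigr\|^2 \le \eta\,E'^2/M$ for some constant $E'$. Your plan to unroll the forward/backward recursions explicitly is more work but more self-contained, and the constants you anticipate, of order $(C_\sigma K_W K_P)^L$, are indeed what that corollary hides.

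There is, however, a genuine arithmetic gap in the telescoped bound you state. You claim
\[
\frac{1}{T}\sum_{t=1}^T \|\nabla\mathcal{L}(W^{(t)})\|_2^2 \;\le\; \frac{\mathcal{L}(W^{(1)}) - \mathcal{L}(W^{*})}{T\eta} + C_1\eta + \frac{C_2 L_f\eta}{M},
\]
but this scaling cannot give the theorem: with $\eta = \sqrt{M\epsilon}/E$ the term $C_1\eta$ equals $C_1\sqrt{M\epsilon}/E$, which \emph{grows} with $M$ rather than being $O(\epsilon)$. The paper's telescoped inequality is instead
\[
\frac{1}{T}\sum_{t=1}^T \|\nabla\mathcal{L}(W^{(t)})\|_2^2 \;\le\; \frac{2}{\eta T}\bigl(\mathcal{L}(W^{(1)}) - \mathcal{L}(W^{*})\bigr) + \frac{\eta^2 E^2}{M},
\]
and it is precisely the $\eta^2$ (not $\eta$) that makes the prescribed $\eta,T$ balance to $O(\epsilon)$. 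The fix is straightforward once you track powers carefully: since your own bound gives $\|e^{(t)}\|=O(\eta)$, the term that survives after Young's inequality and division by $\eta$ is $\|e^{(t)}\|^2 = O(\eta^2)$, not $O(\eta)$. Separately, your justification for the $1/M$ factor (``variance-reduction effect of averaging $M$ parallel workers'') is a stochastic-independence intuition that does not apply to the deterministic per-partition biases $\delta_m^{(t)}$; the paper does not argue this from first principles either but inherits it from the cited PipeGCN corollary, so if you want a self-contained proof you will need to extract the structural reason for the $1/M$ from that result rather than appeal to variance reduction.
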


\begin{proof}
Beginning from the assumption of smoothness of loss function, 
\begin{equation}
\begin{split}
\begin{split}
    \mathcal{L}(W^{t+1}) \leq \mathcal{L}(W^{t}) &+ \left\langle \nabla\mathcal{L}(W^{t}), W^{(t+1)} - W^{(t)} \right\rangle \\
    &+ \frac{L_f}{2}\Vert  W^{(t+1)} - W^{(t)} \Vert_{2}^2 
\end{split}
\end{split}
\end{equation}
Recall that the update rule of SAT is
\begin{equation}
    W^{(t+1)} = W^{(t)} - \frac{\eta}{M} \sum_{m=1}^M \nabla \mathcal{\tilde{L}}_{m}(W_{m}^{(t)})
\end{equation}
so we have
\begin{equation}
\begin{split}
     & \mathcal{L}(W^{t}) + \left\langle \nabla\mathcal{L}(W^{t}), W^{(t+1)} - W^{(t)} \right\rangle \\
     &+ \frac{L_f}{2}\Vert  W^{(t+1)} - W^{(t)} \Vert_{2}^2  \\
    =& \mathcal{L}(W^{t}) - \eta\left\langle \nabla\mathcal{L}(W^{t}),\frac{1}{M} \sum_{m=1}^M \nabla \mathcal{\tilde{L}}_{m}(W_{m}^{(t)}) \right\rangle \\
    &+ \frac{\eta^{2}L_f}{2}\left\Vert  \frac{1}{M} \sum_{m=1}^M \nabla \mathcal{\tilde{L}}_{m}(W_{m}^{(t)}) \right\Vert_{2}^2
\end{split}
\end{equation}

Denote $\delta^{(t)}_{m} = \nabla\mathcal{\tilde{L}}_{m}(W_{m}^{(t)}) - \nabla\mathcal{L}_{m}(W_{m}^{(t)})$, we have
\begin{equation}
\begin{split}
&\mathcal{L}(W^{t+1}) \leq  \mathcal{L}(W^{t}) - \\
&\eta\left\langle \nabla\mathcal{L}(W^{t}),\frac{1}{M} \sum_{m=1}^M \left(\nabla\mathcal{L}_{m}(W_{m}^{(t)}) + \delta^{(t)}_{m}\right) \right\rangle \\
&+ \frac{\eta^{2}L_f}{2} \left\Vert\frac{1}{M} \sum_{m=1}^M \left(\nabla\mathcal{L}_{m}(W_{m}^{(t)}) + \delta^{(t)}_{m}\right) \right\Vert_{2}^2 
\end{split}
\label{eq:tmp}
\end{equation}
Without loss of generality, assume the original graph can be divided evenly into M subgraphs and denote $N=\vert\mathcal{V}\vert$ as the original graph size, i.e., $N=M\cdot S$, where $S$ is each subgraph size. Notice that
\begin{equation}
\begin{split}
    \nabla\mathcal{L}(W^{t}) &= \frac{1}{N}\sum_{i=1}^{N}\nabla Loss(f_{i}^{(L)},y_i) \\
    &= \frac{1}{M}\Big\{ \sum_{m=1}^{M} \frac{1}{S}\sum_{i=1}^{S} \nabla Loss(f_{m,i}^{(L)},y_{m,i}) \Big\}
\end{split}
\end{equation}
which is essentially 
\begin{equation}
    \nabla\mathcal{L}(W^{t}) = \frac{1}{M}\sum_{m=1}^{M} \nabla\mathcal{L}_{m}(W_{m}^{(t)})
\end{equation}

Plugging the equation above into Eq.~\ref{eq:tmp}, we have
\begin{equation*}
    \mathcal{L}(W^{t+1}) \leq \mathcal{L}(W^{t}) - \frac{\eta}{2} \Vert \nabla\mathcal{L}(W^{t}) \Vert_{2}^{2} + \frac{\eta^{2}L_f}{2} \Big\Vert \frac{1}{M}\sum_{m=1}^{M}\delta_{m}^{(t)} \Big\Vert_{2}^{2} 
\end{equation*}
which after rearranging the terms leads to
\begin{equation*}
    \Vert \nabla\mathcal{L}(W^{t}) \Vert_{2}^{2} \leq \frac{2}{\eta}(\mathcal{L}(W^{t}) - \mathcal{L}(W^{t+1})) + \eta L_f \Big\Vert \frac{1}{M}\sum_{m=1}^{M}\delta_{m}^{(t)} \Big\Vert_{2}^{2} 
\end{equation*}
By taking $\eta < 1/L_f$, using the four assumptions defined earlier and Corollary A.10 in~\cite{wan2022pipegcn}, and summing up the inequality above over all iterations, i.e., $t=1,2,\cdots,T$, we have
\begin{equation*}
\begin{split}
    \frac{1}{T}\sum_{t=1}^T \Vert \nabla \mathcal{L}({W}^{(t)}) \Vert^{2} &\leq \frac{2}{\eta T}\big(\mathcal{L}(W^{1}) - \mathcal{L}(W^{T+1})\big) + \frac{\eta^{2}E^{2}}{M} \\
    &\leq \frac{2}{\eta T}\big(\mathcal{L}(W^{1}) - \mathcal{L}(W^{*})\big) + \frac{\eta^{2}E^{2}}{M}
\end{split}
\end{equation*}
where $W^{*}$ denotes the minima of the loss function and $E$ is a constant depends on $E^{\prime}$.

Finally, taking $\eta = \frac{\sqrt{M\epsilon}}{E}$ and $T=(\mathcal{L}({W}^{(1)})-\mathcal{L}({W}^{*}))\frac{E}{\sqrt{M}}\epsilon^{-\frac{3}{2}}$ finishes the proof.

\end{proof}

\bibliographystyle{IEEEtran} 
\bibliography{sat}

\end{document}